\definecolor{mygray}{gray}{0.4}
\newcommand{\stkout}[1]{\ifmmode\text{\sout{\ensuremath{#1}}}\else\sout{#1}\fi}
\theoremstyle{plain}
\newtheorem{theorem}{Theorem}[section]
\newtheorem{lemma}[theorem]{Lemma}
\theoremstyle{definition}
\newtheorem{definition}[theorem]{Definition}
\theoremstyle{remark}
\def\eqref#1{eqn~\ref{#1}}
\def\Eqref#1{Eqn~\ref{#1}}
\def\ceil#1{\lceil #1 \rceil}
\def\1{\bm{1}}
\newcommand{\edit}[1] {{\color{red} #1 } }
\newcommand{\plusminus}[1] { \scriptsize{$\pm$ #1}}
\def\rvu{{\mathbf{i}}}
\def\rvu{{\mathbf{u}}}
\def\rvv{{\mathbf{v}}}
\DeclareMathAlphabet{\mathsfit}{\encodingdefault}{\sfdefault}{m}{sl}
\SetMathAlphabet{\mathsfit}{bold}{\encodingdefault}{\sfdefault}{bx}{n}
\def\gD{{\mathcal{D}}}
\def\gL{{\mathcal{L}}}
\def\gS{{\mathcal{S}}}
\def\gZ{{\mathcal{Z}}}
\def\Prob{{\mathbb{P}}}
\newcommand{\R}{\mathbb{R}}
\DeclareMathOperator*{\argmin}{arg\,min}
\newcommand{\Dcal}{\mathcal{D}_{cal}}
\newcommand{\Dtrain}{\mathcal{D}_{train}}
\newcommand{\Dtest}{\mathcal{D}_{test}}
\newcommand{\Dcalone}{\mathcal{D}_{cal-1}}
\newcommand{\Dcaltwo}{\mathcal{D}_{cal-2}}
\newcommand{\infinityvec}{\{ \boldsymbol{\infty} \}}
\title{Copula Conformal Prediction for Multi-step Time Series Forecasting}
\author{%
  Sophia Sun\\
  University of California, San Diego\\
  \texttt{shs066@ucsd.edu} \\
  \And
  Rose Yu\\
  University of California, San Diego\\
  \texttt{roseyu@ucsd.edu} \\
}
\begin{document}

\maketitle

\begin{abstract}
Accurate uncertainty measurement is a key step in building robust and reliable machine learning systems. Conformal prediction is a distribution-free uncertainty quantification framework popular for its ease of implementation, finite-sample coverage guarantees, and generality for underlying prediction algorithms. However, existing conformal prediction approaches for time series are limited to single-step prediction without considering the temporal dependency. In this paper, we propose the \textbf{Copula} \textbf{C}onformal \textbf{P}rediction algorithm for multivariate, multi-step \textbf{T}ime \textbf{S}eries forecasting, \textbf{CopulaCPTS}. We prove that CopulaCPTS has finite-sample validity guarantee. On four synthetic and real-world multivariate time series datasets, we show that CopulaCPTS produces more calibrated and efficient confidence intervals for multi-step prediction tasks than existing techniques. Our code is open-sourced at \texttt{\url{
https://github.com/Rose-STL-Lab/CopulaCPTS}}.
\end{abstract}

\section{Introduction}
Deep learning models are becoming widely used in high-risk settings such as healthcare and transportation. In these settings, it is important that a model produces calibrated uncertainty to reflect its own confidence. Confidence regions are a common approach to quantify prediction uncertainty \citep{khosravi2011comprehensive}. A $(1-\alpha)$-confidence region $\Gamma^{1-\alpha}$ for a random variable $y$ is \textit{valid} if it contains $y$'s true value with high probability:  
$
\Prob[y \in \Gamma^{1-\alpha}] \geq 1-\alpha.
$
Note that one can make the confidence region infinitely large to satisfy validity. But for the confidence region to be useful, we also want to minimize its area while remaining valid; this is known as the \textit{ efficiency} of the region. 

Conformal prediction (CP) is a powerful framework to produce confidence regions with finite-sample guarantees of validity  \citep{vovk2005algorithmic, lei2018distribution}. Furthermore, it makes no assumptions about the underlying prediction model or the  data distribution. CP's generality, simplicity, and statistical guarantees have made it popular for many real-world applications including time series prediction \citep{xu2021conformal}, drug discovery \citep{eklund2015application} and safe robotics \citep{luo2021sample}.

\begin{figure}[b!]
    \centering
\includegraphics[width=\linewidth]{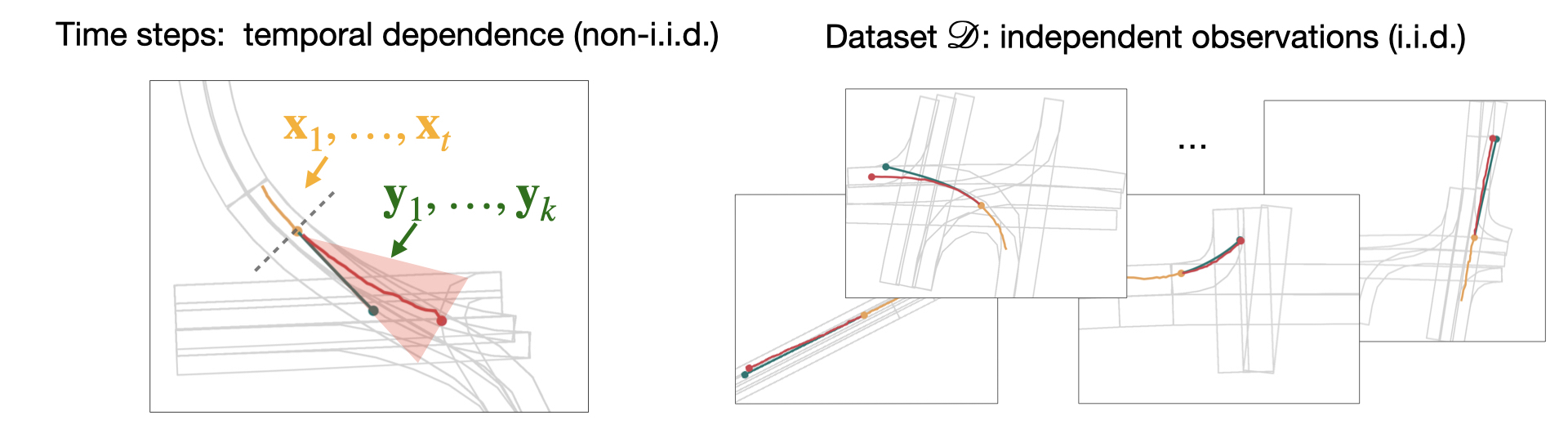}
    \caption{Illustration of the multi-step time series forecasting setting. (Left) The timesteps within a time series are temporally dependent, and (Right) the observations in the dataset are independent.} 
    \label{fig:setting}
\end{figure}

This paper considers the setting of multi-step time series forecasting from a set of independent sequences. Consider the problem of vehicle trajectory prediction, illustrated in Figure \ref{fig:setting}. Given a dataset of trajectories, the task is to predict a future trajectory for $k$ steps given its past trajectory of $t$ time steps. We assume that the trajectories are independent from each other. For each trajectory, these time steps are temporally dependent.

There are many real-world tasks that present the same challenges as the example above, such as EEG forecasting (each patient is independent), short-term weather forecasting (local meteorology history is independent), etc. They require predicting multiple time steps into the future, so it is desired to have a ``cone of uncertainty'' that covers the entire course of the forecasts. 
Existing CP methods for time series data either only provide coverage guarantee for individual time steps \citep{gibbs2021adaptive, xu2021conformal} or produce confidence regions that are often too inefficient to be useful, especially in long horizons or multivariate settings \citep{Schaar2021conformaltime}. 

In this paper, we present a practical and effective conformal prediction algorithm for multi-step time series forecasting.  We introduce \textbf{CopulaCPTS}, a \textbf{Copula}-based \textbf{C}onformal \textbf{P}rediction algorithm for multi-step \textbf{T}ime \textbf{S}eries forecasting. 
A copula is a multivariate cumulative distribution function that models the dependence between multiple random variables. By using copulas to model the uncertainty jointly over future time steps, we can shrink the confidence regions significantly while maintaining validity. Copulas have been used for conformal prediction \citep{messoudi2021copula}, but they focus on multiple target prediction in non-temporal settings and did not provide a validity proof. 

In summary, our contributions are:

\begin{itemize}
    \item We introduce \textbf{CopulaCPTS}, a general uncertainty quantification algorithm that can be applied to \textit{any} multivariate multi-step forecaster.
    \item We prove that \textbf{CopulaCPTS} produces valid confidence regions for the full forecast horizon.
    \item \textbf{CopulaCPTS} produces significantly sharper and more calibrated uncertainty estimates than state-of-the-art baselines on two synthetic and two real-world benchmark datasets.
    \item We extend \textbf{CopulaCPTS} to obtain valid confidence intervals for time series forecasts of varying lengths. 
\end{itemize}

\vspace{-0.5em}
\section{Related Work}
\vspace{-0.5em}
\paragraph{Deep Uncertainty Quantification for Time-Series Forecasting.} The two major paradigms of  Uncertainty Quantification (UQ) methods for deep neural networks are Bayesian and Frequentist. Bayesian approaches estimate a distribution over the model parameters given data, and then marginalize these parameters to form output distributions via Markov Chain Monte Carlo (MCMC) sampling \citep{welling2011bayesian, neal2012bayesian, chen2014stochastic} or variational inference (VI)  \citep{graves2011practical,kingma2015variational, blundell2015weight, louizos2017multiplicative}. \cite{wang2019deep, wu2021quantifying} propose Bayesian Neural Networks (BNN) for UQ of spatiotemporal forecasts. In practice, Bayesian UQ can be computationally expensive and difficult to optimize, especially for larger networks \citep{lakshminarayanan2017simple, zadrozny2001obtaining}. Furthermore, Bayesian methods do not provide any finite sample coverage guarantees. Therefore, UQ for deep neural network time series forecasts often adopts approximate Bayesian inference such as MC-dropout \citep{gal2016theoretically, gal2017concrete}.

Frequentist UQ methods emphasize robustness against variations in the data. These approaches either rely on resampling the data or learning an interval bound to encompass the dataset. For time series forecasting UQ, approaches include ensemble methods such as bootstrap \citep{efron2016, bjrnn} and jackknife methods  \citep{jackknife+after, bjrnn}; interval prediction methods include interval regression through proper scoring rules \citep{kivaranovic2020adaptive, wu2021quantifying}, and quantile regression \citep{takeuchi2006nonparametric}, with many recent advances for time series UQ  \citep{tagasovska2019single, gasthaus2019probabilistic,park2022learning, kan2022multivariate}. Many of the frequentist methods produce asymptotically valid confidence regions and can be categorized as distribution-free UQ techniques as they are (1) agnostic to the underlying model and (2) agnostic to data distribution.
\vspace{-0.5em}
\paragraph{Conformal Prediction.} 
Conformal prediction (CP) is an important member of distribution-free UQ methods; we refer readers to \cite{angelopoulos2021gentle} for a comprehensive introduction and survey of CP. CP has become popular because of its simplicity, generality,  theoretical soundness, and low computational cost. A key feature of CP is that under the exchangeability assumption, conformal methods guarantee validity in finite samples \citep{vovk2005algorithmic}.

Most relevant to our work is the recent endeavor in generalizing CP to time-series forecasting. According to \cite{Schaar2021conformaltime} there are two settings: data generated from (1) one single time series or (2) multiple independent time series. For the first setting, ACI \citep{gibbs2021adaptive} and EnbPI \citep{xu2021conformal} developed CP algorithms that relax the exchangeability assumption while maintaining asymptotic validity via online learning (former) and ensembling (later); \cite{zaffran2022adaptive} further improves online adaptation.  \cite{sousa2022general} combines EnbPI with conformal quantile regression \citep{romano2019conformalized} to model heteroscedastic time series. However, because these algorithms operate on one single time series, the validity guarantees do not cover the full horizon, posing issues for application in high-risk settings.  

We focus on the setting where data consists of many independent time series. \cite{Schaar2021conformaltime} shares the same setting as ours but provides only a univariate time series solution. We show that their method of applying Bonferroni correction produces inefficient confidence regions, especially for multidimensional data or long prediction horizons. 
\cite{messoudi2021copula} uses a copula function for multi-target CP for non-temporal data, creating box-like regions to account for the correlations between the labels. However, they do not provide theoretical proof and empirical results have shown that are often invalid. This paper builds upon these works and presents a novel two-step algorithm with guaranteed multivariate multi-step coverage and efficient confidence regions.

\vspace{-0.5em}
\section{Background}
\vspace{-0.5em}

\subsection{Inductive Conformal Prediction (ICP)}
Let $\mathcal{D} = \{z^i = (x^i, y^i)\}_{i=1}^n$ be a dataset with input $x^i \in \mathcal{X}$ and output $y^i \in \mathcal{Y}$ such that each data point $z^i \in \mathcal{Z}:= \mathcal{X} \times \mathcal{Y}$ is drawn i.i.d. from an unknown distribution $\mathcal{Z}$. 

We will briefly present the algorithm and theoretical results for conformal prediction, and refer readers to~\cite{angelopoulos2021gentle} for a thorough introduction. The goal of conformal prediction is to produce a \textit{valid} confidence region (Def.~\ref{def:validity}) for any underlying prediction model.
\begin{definition}[Validity]
Given a new data point $(x,y)$ and a desired confidence $1-\alpha \in (0,1)$, the confidence region $\Gamma^{1-\alpha}(x)$ is a subset of $\mathcal{Y}$ containing probable outputs $\tilde{y} \in \mathcal{Y}$ given $x$. The region $\Gamma^{1-\alpha}$ is valid if
\begin{equation}\label{eq:coverage}
\Prob [y \in \Gamma^{1-\alpha}(x)] \geq 1-\alpha
 \end{equation}
\label{def:validity}
\end{definition}
\vspace{-1em}
Conformal prediction splits the dataset into a proper training set $\Dtrain$  and a calibration set $\Dcal$. A prediction model $\hat{f}: \mathcal{X} \rightarrow  \mathcal{Y}$ is trained on $\Dtrain$. We use a \textit{nonconformity score} $A: \mathcal{Z}^{|\Dtrain|} \times \mathcal{Z} \rightarrow \mathbb{R}$ to quantify how well a data sample from calibration \textit{conforms} to the training dataset. Typically, we choose a metric of disagreement between the prediction and the true label as the non-conformity score, such as the Euclidean distance: 
\begin{equation}
 A(\Dtrain, (x,y)) \stackrel{\text{e.g.}}{=} d(y,\hat{f}(x)) \stackrel{\text{e.g.}}{=} 
\|y - \hat{f}(x)\|_2 
\label{def:non-c}
\end{equation}
For conciseness, we write $A(\Dtrain, (x^i,y^i))$ as $A(z^i)$ in rest of the paper. 

Let $\gS = \{A(z^i)\}_{z^i \in \Dcal}$ denote the set of nonconformity scores of all samples in the calibration set $\Dcal$. We can define a quantile function for the nonconformity scores $\gS$ as:
\vspace{-0.5em}
\begin{equation}
    Q(1-\alpha, \gS) := \text{inf} \{s^*: (\frac{1}{|\gS|} \sum_{s^i \in \gS} \mathbbm{1}_{s^i \leq s^*}) \geq 1-\alpha \}.
    \label{eq:quantile}
\end{equation}
\vspace{-1em}

Conformal prediction is guaranteed to produce valid confidence regions \citep{vovk2005algorithmic}, under the exchangeablility assumption defined as follows,
\begin{definition}[Exchangeability]
In a dataset $\{z^i\}_{i=1}^n$ of size  $n$, any of its $n$! permutations are equally probable. 
\label{def:exchangability}
\end{definition}
\vspace{-0.5em}
%
%
The procedure introduced above is known as \textit{inductive} conformal prediction, as it splits the dataset into training and calibration sets to reduce computation load \citep{vovk2005algorithmic,lei2012distribution}. Our method is based on inductive CP, but can also be easily adapted for other CP variants.

\subsection{Copula and Its Properties}
Copula is a concept from statistics that describes the dependency structure in a multivariate distribution.  It has also been used in generative models for multivariate time series \citep{salinas2019high, drouin2022tactis}. We can use copulas to capture the joint distribution for multiple future time steps. We briefly introduce its notations and concepts.
\begin{definition}[Copula]
Given a random vector $(X_1,\cdots X_k)$, define the marginal cumulative density function (CDF) for each variable $X_h, h\in \{1, \ldots, k\}$ as 
\[
F_h(x) = \Prob [X_h \leq x]
\]
The copula of $(X_1, \cdots X_k)$ is the joint CDF of $(F_1(X_1), \cdots, F_k(X_k))$, written as  
\[ C(u_1,\cdots, u_k)  = \Prob \left[ F_1(X_1)\leq u_1, \cdots, F_t(X_k)\leq u_k \right]
\]
\end{definition}
In other words, the copula function captures the dependency structure between the variable $X$s; we can view an $k$ dimensional copula $C: [0,1]^k \rightarrow [0,1]$ as a CDF with uniform marginals, as illustrated in Figure \ref{fig:copula}. A fundamental result in the theory of copula is Sklar's theorem. 
\begin{theorem}[Sklar's theorem]
Given a joint CDF  as $F(X_1,\cdots, X_k)$ and the marginals $F_1(x), \ldots, F_k(x)$,  there exists a copula such that 
\[F(x_1,\cdots, x_k) = C(F_1(x_1), \cdots, F_k(x_k))\]
for all $x_j\in (-\infty, \infty), j \in \{1, \ldots, k\}$.
\end{theorem}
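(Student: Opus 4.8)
The plan is to prove Sklar's theorem constructively: exhibit a copula $C$ directly from the law of $(X_1,\dots,X_k)$ and then verify the factorization. Write $F$ for the joint CDF, $F_1,\dots,F_k$ for the marginals, and define the quantile function $F_h^{-1}(u):=\inf\{x\in\R:F_h(x)\ge u\}$. I would split into two cases, the continuous case (clean, via the probability integral transform) and the general case (the real content, via an auxiliary randomization).

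\emph{Continuous marginals.} If every $F_h$ is continuous, then $F_h\circ F_h^{-1}$ is the identity on $(0,1)$ and $F_h(X_h)\sim\mathrm{Uniform}[0,1]$. Set
\[
C(u_1,\dots,u_k):=F\big(F_1^{-1}(u_1),\dots,F_k^{-1}(u_k)\big).
\]
A routine check (monotonicity, right-continuity, and the $k$-dimensional rectangle inequality, all inherited from $F$ together with monotonicity of the $F_h^{-1}$, using continuity of $F_h$ to handle flat stretches) shows $C$ is a CDF on $[0,1]^k$; fixing all arguments but one at $1$ shows the $h$-th marginal of $C$ is $u\mapsto u$, so $C$ is a copula. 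Finally $F_h^{-1}(F_h(x_h))\le x_h$ and, by continuity, $\Prob[F_h^{-1}(F_h(x_h))<X_h\le x_h]=0$, so plugging $u_h=F_h(x_h)$ gives $C(F_1(x_1),\dots,F_k(x_k))=F(x_1,\dots,x_k)$.

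\emph{General marginals.} When some $F_h$ has jumps, $F_h(X_h)$ is no longer uniform, and this is the main obstacle. I would repair it with the distributional transform: let $V_1,\dots,V_k$ be i.i.d.\ $\mathrm{Uniform}[0,1]$ independent of $(X_1,\dots,X_k)$, write $F_h(x-)$ for the left limit of $F_h$ at $x$, and put
\[
U_h:=F_h(X_h-)+V_h\big(F_h(X_h)-F_h(X_h-)\big).
\]
Two claims do the work. First, each $U_h\sim\mathrm{Uniform}[0,1]$: conditioning on $X_h$ and computing $\Prob[U_h\le t\mid X_h]$ handles the atoms of $F_h$ (where $V_h$ spreads the mass linearly across the jump) and the continuity points (where $U_h=F_h(X_h)$) in one stroke. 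Second, for every $x_h$ the events $\{X_h\le x_h\}$ and $\{U_h\le F_h(x_h)\}$ agree up to a $\Prob$-null set: since $V_h\le 1$ we have $U_h\le F_h(X_h)$, whence $\{X_h\le x_h\}\subseteq\{U_h\le F_h(x_h)\}$ by monotonicity of $F_h$, and by the first claim both events have probability $F_h(x_h)$, so they are a.s.\ equal. Now define $C$ to be the joint CDF of $(U_1,\dots,U_k)$, which is a copula by the first claim. Intersecting the a.s.-equalities over $h=1,\dots,k$,
\[
C\big(F_1(x_1),\dots,F_k(x_k)\big)=\Prob\!\Big[\bigcap_{h=1}^k\{U_h\le F_h(x_h)\}\Big]=\Prob\!\Big[\bigcap_{h=1}^k\{X_h\le x_h\}\Big]=F(x_1,\dots,x_k),
\]
which is the desired identity (and subsumes the continuous case, since then the jumps vanish and $U_h=F_h(X_h)$, so one could in fact skip the first case). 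I expect the only genuinely delicate step to be the conditional computation establishing that $U_h$ is uniform when $F_h$ mixes atoms and a continuous part; the rest is bookkeeping about CDFs, generalized inverses, and finite intersections of almost-sure events.
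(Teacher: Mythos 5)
Your argument is correct, but note that the paper does not prove this statement at all: Sklar's theorem appears there only as classical background (quoted from the copula literature, e.g.\ Nelsen's monograph) and is used as a black box to motivate the empirical-copula construction, so there is no in-paper proof to compare against. What you give is essentially R\"uschendorf's proof via the distributional transform, and it is sound: the coordinatewise a.s.\ identity of $\{X_h\le x_h\}$ and $\{U_h\le F_h(x_h)\}$ (containment plus equality of probabilities, using uniformity of $U_h$) is exactly the right mechanism, and intersecting over $h$ yields the factorization for the copula $C$ defined as the joint CDF of $(U_1,\dots,U_k)$, which has uniform marginals by construction; this covers the general case with atoms, which is the actual content of the theorem, and it delivers the existence claim as stated (uniqueness, which the paper does not assert, would require the extra observation that $C$ is determined on the product of the ranges of the $F_h$). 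Two small remarks. First, your separate continuous case is dispensable, as you note, and its ``routine check'' of the rectangle inequality for $C(u_1,\dots,u_k)=F(F_1^{-1}(u_1),\dots,F_k^{-1}(u_k))$ is more cleanly obtained by recognizing that, for continuous marginals, this $C$ coincides with the joint CDF of $(F_1(X_1),\dots,F_k(X_k))$, i.e.\ by the same device as your general argument. Second, the one step you flag as delicate, $U_h\sim\mathrm{Uniform}[0,1]$, is indeed where the work lies; the clean way to finish it is to compute, for $t\in(0,1)$, $\Prob[U_h\le t\mid X_h=x]$, which equals $1$ if $F_h(x)\le t$ is replaced by $F_h(x-)\ge t$ being false and $F_h(x)\le t$ true, equals $0$ if $F_h(x-)\ge t$, and equals $\bigl(t-F_h(x-)\bigr)/\bigl(F_h(x)-F_h(x-)\bigr)$ on the jump, after which integrating against the law of $X_h$ gives $\Prob[U_h\le t]=t$; spelling this out would make the proof fully self-contained.
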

Sklar's theorem states that for all multivariate distribution functions, there exists a copula function such that the distribution can be expressed using the copula and multiple univariate marginal distributions. When all the $X_k$s are independent, the copula function is known as the product copula: $C(u_1, \cdots, u_k) = \Pi_{i=1}^k u_i$. 
\begin{figure}[h!]
    \centering
    \includegraphics[width=0.9\linewidth]{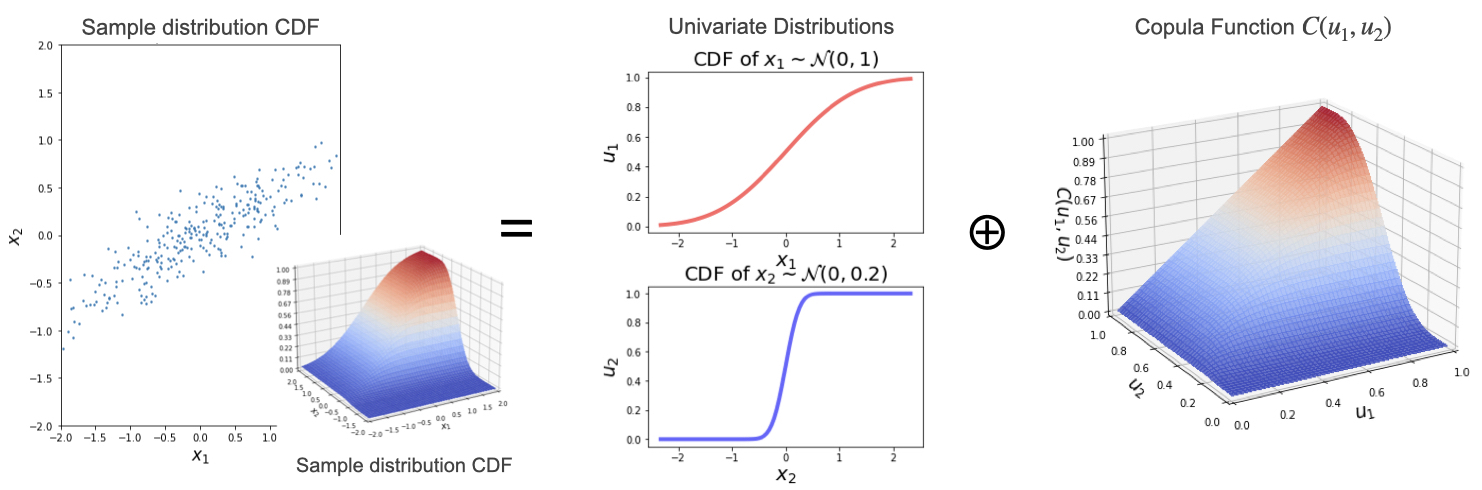}
    \caption{An example copula, where we express a multivariate Gaussian with correlation $\rho = 0.8$ with two univariate distributions and a copula function $C(u_1, u_2)$.} 
    \label{fig:copula}
\end{figure}
\vspace{-1em}

\vspace{-0.5em}
\section{Copula Conformal Prediction for Time Series (CopulaCPTS)}
UQ methods are evaluated on two properties: \textit{validity} and \textit{efficiency}. A model is \textit{valid} when the predicted confidence is greater than or equal to the probability of events falling into the predicted range (Definition \ref{def:validity}). The term \textit{calibration} describes the case of equality in the validity condition. \textit{Efficiency}, on the other hand, refers to the size of the confidence region. In practice, we want the measure of the confidence region (e.g. its area or length) to be as small as possible, given that the validity condition holds. CopulaCPTS improves the efficiency of confidence regions by modeling the dependency of the time steps using a copula function.

Denote the time series dataset of size $n$ as $\mathcal{D} = \{ z^i = (x_{1:t}^{i}, y_{1:k}^{i}) \}_{i=1}^n$, where $x_{1:t} \in \R^{t\times d} $ is $t$ input steps, and $y_{1:k}  \in  \R^{k\times d}$ is $k$ prediction steps, both with dimension $d$ at each step.  Each data point $z^i$ is sampled i.i.d. from an unknown distribution $\mathcal{Z}$.  In the multi-step forecasting setting, given a confidence level $1-\alpha$, the algorithm produces $k$ confidence regions for a test input $x_{1:t}^{n+1}$, denoted as  $[\Gamma^{1-\alpha}_1, \ldots, \Gamma^{1-\alpha}_k]$.  We say the confidence regions are \textit{valid} if all time steps in the forecast are covered:
\begin{equation}
    \mathbb{P}[\: \forall j \in \{1,\ldots, k\}, \: \:  y_{j} \in \Gamma^{1-\alpha}_j] \geq 1-\alpha.
\label{eq:tsvalidity}
\end{equation}

In the following sections, we introduce CopulaCPTS, a conformal prediction algorithm that is both valid and efficient for multivariate  multi-step time series forecasts. 

\subsection{Algorithm Details}
 
The key insight of our algorithm is that we can model the joint probability of uncertainty for multiple predicted time steps with a copula, hence better capturing the confidence regions. We divide the calibration set $\mathcal{D}_{cal}$ into two subsets: $\Dcalone$, which we use to estimate a Cumulative Distribution Function (CDF) on the nonconformity score of each time step, and $\Dcaltwo$, to calibrate the copula. 

The two calibration sets allow us to prove validity for both components of our algorithm. At the cost of using a subset of the data to calibrate a copula, our approach produces provably more efficient confidence regions compared to worst-case corrections such as union bounding in \cite{Schaar2021conformaltime} which is a lower bound for copulas (Appendix \ref{app:bounds}), and more valid regions than \cite{messoudi2021copula} (table \ref{tab:baselines_new}). 

\vspace{-0.5em}
\paragraph{Nonconformity of Multivariate Forecasts.} 
If the time series is multivariate, we have each target time step $y_j \in \mathbb{R}^d$. Given $z = (x,y) \sim \mathcal{Z}$, let the nonconformity score be the L-2 distance $s^i_j = A(z^i)_j \stackrel{\text{e.g.}}{=} \|y^i_j - \hat{f}(x^i)_j \|$ for each timestep $j = 1, \ldots, k$, where $\hat{f}(x)$ is a forecasting model trained on $\Dtrain$. The confidence region $\Gamma^{1-\alpha}(x)$ therefore is a $d$-dimensional ball. We chose this metric for simplicity, but one can choose other metrics such as Mahalanobis \citep{johnstone2021conformal} or L-1 \citep{messoudi2021copula} distance based on domain needs, and our algorithm will remain valid.  

For brevity, we will use $\gS1 = \{s^i\}_{z^i \in\Dcalone}$ to denote the set of nonconformity scores of data in $\Dcalone$ and $\gS2 = \{s^i\}_{z^i \in\Dcaltwo}$ the set of nonconformity scores of data in $\Dcaltwo$. Subscript $j$ will be used to index the set of specific time steps of the scores:  $\gS1_j = \{s^i_j\}_{z^i \in\Dcalone}$,  $\gS2_j = \{s^i_j\}_{z^i \in\Dcaltwo}$.

\vspace{-0.5em}
\paragraph{Calibrating CDF on $\Dcalone$.} We use $\Dcalone$ to build conformal predictive distributions for each time step's anomaly scores, which provides desirable validity properties \citep{vovk2017nonparametric}. The conformal cumulative distribution function is constructed as follows. \footnote{Because of the random component,  equation \ref{eq:tscdf} is a ``thick" CDF of thickness $\frac{1}{|\gS1|+1}$, which becomes inconsequential when the calibration set is large.  See \cite{vovk2017nonparametric} for theoretical justifications. In implementation, we treat $\tau=1$ for simplicity and better coverage.} 

\begin{equation}
      \hat{F}_j(s_j) := \frac{1}{|\gS1_j|+1} \Bigl( \tau + \sum_{s^i \in \gS1_j 
       } \mathbbm{1}_{s^i_j < s_j} \Bigr) ,  \;  \text{where } \tau \sim (0,1), \; \text{for } j \in \{1, \ldots, k\}
      \label{eq:tscdf}
\end{equation}

%
%
\vspace{-0.5em}
\paragraph{Copula Calibration on $\Dcaltwo$.} Next, for every data point in $\Dcaltwo$, we evaluate the cumulative probability of its anomaly scores with the estimated conformal predictive distributions:
\begin{equation}
\mathcal{U} = \{\rvu^i \}_{ i \in \Dcaltwo }, \quad \rvu^i = (u^i_1,  \ldots, u^i_k) = \big( \hat{F}_1(s^i_1), \ldots, \hat{F}_k(s^i_k) \big)
    \label{eq:calc-u}
\end{equation}

%
We adopt the empirical copula \citep{empiricalcop1} for modeling and proof in this work. The empirical copula is a non-parametric method of estimating marginals directly from observation, and hence does not introduce any bias. For the joint distribution of $k$ time steps, we construct $C_{\text{empirical}}: [0,1]^k \rightarrow [0,1]$ as \Eqref{eq:empiricalcopula}. 
\begin{equation}
C_{\text{empirical}} (\mathbf{u}) 
= \frac{1}{|\Dcaltwo|+1 }\sum_{i\in \Dcaltwo \cup \infinityvec} \prod_{j=1}^{k}  \mathbbm{1}_{\mathbf{u}^i_j <\mathbf{u}_j}
\label{eq:empiricalcopula}
\end{equation}
Here boldface $\boldsymbol{\infty}$ is a k-dimensional vector with each $\boldsymbol{\infty}_j = \infty$ for $j = 1, \ldots, k$. 

To fulfill the full-horizon validity condition of \Eqref{eq:tsvalidity}, we only need to find appropriate $\rvu^*$ such that $C_{\text{empirical}}(\rvu^*) \geq 1-\alpha$. 
\begin{equation}
\argmin_{\rvu^*} \sum_{j=1}^k \rvu^*_j \quad \text{s.t.}  \; C_{\text{empirical}}(\rvu^*) \geq 1-\alpha
\label{eq:search}
\end{equation}
Note that the $\rvu^*$ is not and does not have to be unique; any solution that satisfies the constraint in Eq. \ref{eq:search} will guarantee multi-step validity (Appendix \ref{app:proof}). The minimization helps with efficiency, i.e. the sharpness of the confidence regions. We use a gradient descent algorithm for the optimization in implementation (see Appendix \ref{app:sgd} for details, and Appendix \ref{app:alpha_search} for a study of its effectiveness). 
Lastly, We obtain $(s_1^*, \ldots, s_k^*)$ by $\hat{F}_j^{-1}(\rvu^*_j)$ and construct the confidence region for each time step $j \in \{ 1, \ldots, k\}$  as the set of all $y_j \in  \R^{d}$ such that the nonconformity score is less than $s_j^*$. Algorithm \ref{alg:cpts} summarizes the CoupulaCPTS procedure. 

The full proof of CopulaCPTS's validity (theorem \ref{lem:copulavalid}) can be found in Appendix \ref{app:proof}. Intuitively, CopulaCPTS performs conformal prediction twice: first calibrating the nonconformity scores of each time step with $\Dcalone$, and then calibrating the copula with $\Dcaltwo$. 

\begin{theorem}[Validity of CopulaCPTS]
CopulaCPTS (algorithm \ref{alg:cpts}) produces valid confidence regions for the entire forecast horizon. i.e. \[\Prob[ \: \forall j \in \{1,\ldots, k\}, \: y_{j} \in \Gamma^{1-\alpha}_j\: ] \geq 1-\alpha .\]
\label{lem:copulavalid}
\end{theorem}
\vspace{-1.5em}
\begin{algorithm}
\caption{Copula Conformal Time Series Prediction (\textbf{CopulaCPTS})}
\label{alg:cpts}
\SetKwInput{Input}{Input}
\SetKwInput{Output}{Output}
\SetKw{kwyield}{yield}
\SetKw{kwfor}{for}
\Input{Dataset $\gD$, test inputs $\Dtest$, target significant level $1-\alpha$.}
\Output{$\Gamma^{1-\alpha}_1, \ldots,\Gamma^{1-\alpha}_k$ for each test input.} 
\hrulefill

{\color{mygray} \texttt{// Training}} \\
Randomly split dataset $\gD$ into $\Dtrain$ and $\Dcal$.\\ 
Train $k$-step forecasting model $\hat{f}$ on training set $\Dtrain$.

{\color{mygray} \texttt{// Calibration}} \\ 
Randomly split $\mathcal{D}_{cal}$ into $\Dcalone$ and $\Dcaltwo$. \\
\For{$(x_{1:t}^i, y_{1:k}^i) \in \Dcalone \cup \Dcaltwo$}{
    $\hat{y}^{i}_{1:k} \leftarrow \hat{f}(x_{1:t}^i)$ \\
    $s^i_j \leftarrow  \|y^i_j - \hat{y}^i_j \| $ \kwfor{$j=1, \ldots , k$}\\
}
$\hat{F}_1, \ldots, \hat{F}_k\leftarrow$ Eq. (\ref{eq:tscdf}) on $ \Dcalone$ \\
$C_\text{empirical}(\cdot) \leftarrow$ Eq. (\ref{eq:empiricalcopula}) on $ \Dcaltwo$ \\
$\rvu^* \leftarrow$  Eq. (\ref{eq:search}) \\
$s^*_j =\hat{F}_j^{-1}(\rvu^*_j)$ \kwfor{$j=1, \ldots , k$}\\

{\color{mygray}\texttt{// Prediction} }\\
\For{$x_{1:t}^i \in \Dtest$}{
    $\hat{y}^{i}_{1:k} \leftarrow \hat{f}(x_{1:t}^i)$ \\
     $\Gamma^{1-\alpha}_j \leftarrow \{ y : \;  \|y - \hat{y}^i_h \| < s^*_j \} $ \kwfor{$j=1, \ldots , k$}\\
    \kwyield{$\Gamma^{1-\alpha}_1, \ldots,\Gamma^{1-\alpha}_k$}
}
\end{algorithm}
\vspace{-0.5em}

\vspace{-0.5em}
\section{Experiments}
In this section, we show that CopulaCPTS produces more calibrated and efficient confidence regions compared to existing methods on two synthetic datasets and two real-world datasets.  We demonstrate that CopulaCPTS's advantage is more evident over longer prediction horizons in Section \ref{horizon}. We also show its effectiveness in the autoregressive prediction setting in Section \ref{ar}.

All experiments in this paper split the calibration set in half into equal-sized $\Dcalone$ and $\Dcaltwo$. Although the split does not significantly impact the result when calibration data is ample, performance deteriorates when there are not enough data in either one of the subsets.

\vspace{-0.5em}
\paragraph{Baselines.} We compare our model with three representative works in different paradigms of deep uncertainty quantification: the Bayesian-motivated Monte Carlo dropout RNN (\textbf{MC-dropout}) by \cite{gal2016dropout}, the frequentist blockwise jackknife RNN (\textbf{BJRNN}) by \cite{bjrnn}, a conformal forecasting RNN (\textbf{CF-RNN}) by \cite{Schaar2021conformaltime}, and the multi-target copula algorithm that does not have the two step calibration (\textbf{Copula}) by \cite{messoudi2021copula}. We use the same underlying prediction model for post-hoc uncertainty quantification methods BJRNN, CF-RNN, and CopulaCPTS. The MC-dropout RNN is of the same architecture but is trained separately, as it requires an extra dropout step during training and inference. 

\vspace{-0.5em}
\paragraph{Metrics.}
We evaluate calibration and efficiency for each method. For calibration, we report the empirical coverage on the test set. Coverage should be as close to the desired confidence level $1-\alpha$ as possible. Coverage is calculated as:

$
    \mathrm{Coverage}_{1-\alpha} =  \mathbb{E}_{x,y \sim \mathcal{Z}} \Prob[\mathbf{y} \in \Gamma^{1-\alpha}(x)] \approx \frac{1}{|\Dtest|} \sum_{x^i,y^i \in |\Dtest|} \mathbbm{1} (y^i \in \Gamma^{1-\alpha} (x^i))
$.

For efficiency, we report the average area (2D) or volume (3D) of the confidence regions. The measure should be as small as possible while being valid (coverage maintains above-specified confidence level). The area or volume is calculated as: 

$
\mathrm{Area}_{1-\alpha} = \mathbb{E}_{x \sim \mathcal{X}} [\|\Gamma^{1-\alpha}(x)\|]  \approx \frac{1}{|\Dtest|} \sum_{x^i \in |\Dtest|} \| \Gamma^{1-\alpha}(x^i)\|
$.

\vspace{-0.5em}
\subsection{Synthetic Datasets}
We first test the effectiveness of our models on two synthetic spatiotemporal datasets - interacting particle systems \citep{nridataset}, and drone trajectory following simulated with PythonRobotics \citep{pythonrobotics}. For particle simulation, we predict $\mathbf{y}_{t+1:t+h}$ where $t=35$, $h=25$ and $y_t \in \mathbb{R}^2$; for drone simulation $t=60$, $h=10$, and $y_t \in \mathbb{R}^3$. To add randomness to the tasks, we added Gaussian noise of $\sigma = .01$ and $.05$ to the dynamics of particle simulation and $\sigma = .02$ to drone dynamics. We generate 5000 samples for each dataset, and split the data by 45/45/10 for train, calibration, and test, respectively. For baselines that does not require calibration, the calibration split is used for training the model. Please see Appendix C.1 for forecaster model details.

We visualize the calibration and efficiency of the methods in \autoref{fig:synthetic} for confidence levels $1-\alpha = 0.5$ to $0.95$. We can see that Copula-RNN, the red lines, are more calibrated and efficient compared to other baseline methods, especially when the confidence level is high ($90\%$ and $95\%$). We can see that for harder tasks (particle $ \sigma = 0.05$, and drone trajectory prediction), MC-Dropout is overconfident, whereas BJ-RNN and CF-RNN produce very large (hence inefficient) confidence regions. This behavior of CF-RNN is expected because they apply Bonferroni correction to account for joint prediction for multiple time steps, which is an upper bound of copula functions. Numerical results for confidence level $90\%$ are presented in Table \ref{tab:baselines_new}. A qualitative comparison of confidence regions for drone simulation can be found in Figure \ref{fig:drone} in Appendix \ref{app:results}.
\vspace{-0.5em}

\begin{figure}[h!]
    \centering
    \caption{Calibration (upper row) and efficiency (lower row) comparison on different $1-\alpha$ levels for synthetic data sets. Shaded regions are $\pm$ 2 standard deviations over 3 runs. For calibration, the goal is to stay above the green dotted (validity) and coincide as closely as possible (calibration). CopulaCPTS is more calibrated across different significance levels. For efficiency, we want the metric to be small. CopulaCPTS outperforms the baselines consistently. (MC-dropout for the right two experiments produces invalid regions, so we don't consider its efficiency.)}
    \includegraphics[width=0.95\linewidth]{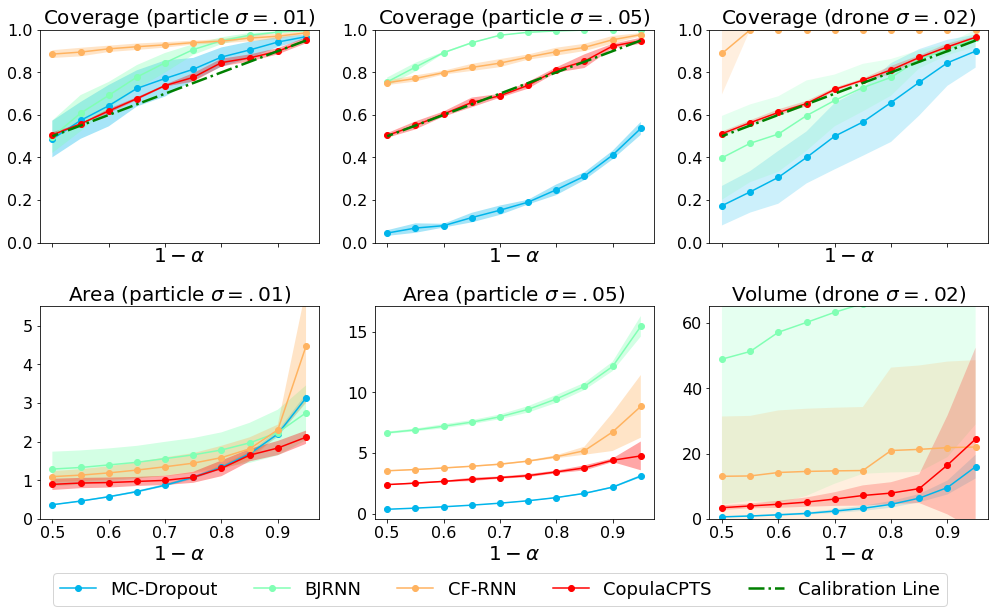}
    \label{fig:synthetic} 
    \vspace{-0.8em}
\end{figure}

\begin{table}
\caption{Performance in synthetic and real-world datasets with target confidence $1-\alpha = 0.9$. Methods that are \textit{invalid} (coverage below $90\%$) are greyed out. CopulaCPTS achieves high level of calibration (coverage is close to $90\%$) while producing more efficient confidence regions. \\} \vspace{-0.5em}
\label{tab:baselines_new}
\centering
\resizebox{\columnwidth}{!}{%
\begin{tabular}{ c | c | c  c c  c c  } 
 \toprule
 & & MC-dropout & BJRNN & CF-RNN & Copula & CopulaCPTS \\
\midrule\midrule
 \multirow{2}{*}{\begin{tabular}{l}Particle Sim \\ ($\sigma=.01$)\end{tabular}} & Cov  &   91.5 \plusminus 2.0  &   98.9 \plusminus 0.2   &    97.3 \plusminus 1.2   & \color{gray}{86.9 \plusminus 1.9} & \textbf{  91.3 } \plusminus 1.5   \\ 
 & Area  & 2.22 \plusminus 0.05 &  2.24 \plusminus 0.59   &   1.97 \plusminus 0.4   & \color{gray}{0.63 \plusminus 0.07} & \textbf{1.08}  \plusminus 0.14   \\ \midrule  
 \multirow{2}{*}{\begin{tabular}{l}Particle Sim\\ ($\sigma=.05$)\end{tabular}} & Cov  & \color{gray}{  46.1 \plusminus 3.7 } &   100.0 \plusminus 0.0   &   94.5 \plusminus 1.5   & \color{gray}{88.6 \plusminus 1.7} & \textbf{90.6} \plusminus 0.6   \\ 
 & Area  & \color{gray}{ 2.16 \plusminus 0.08 } &   12.13 \plusminus 0.39   &   5.80 \plusminus 0.52  & \color{gray}{ 4.67 \plusminus 0.16}& \textbf{ 5.27 } \plusminus 1.02   \\ \midrule  
 \multirow{2}{*}{\begin{tabular}{l}Drone Sim\\ ($\sigma=.02$)\end{tabular}} & Cov  & \color{gray}{  84.5 \plusminus 10.8  } &   90.8 \plusminus 2.8   &   91.6 \plusminus 9.2   & \color{gray}{89.2 \plusminus 1.3} & \textbf{90.0} \plusminus 0.8  \\ 
 & Vol  & \color{gray}{  9.64 \plusminus 2.13 } &   49.57 \plusminus 3.77  &   32.18 \plusminus 13.66  & \color{gray}{16.92 \plusminus 8.9} & \textbf{17.12}  \plusminus 6.93 \\
 \midrule\midrule
 \multirow{2}{*}{\begin{tabular}{l}COVID-19 \\ Daily Cases \end{tabular}} & Cov  & \color{gray}{ 19.1 \plusminus 5.1 } &  \color{gray}{ 79.2 \plusminus 30.8}  &  95.4 \plusminus 1.9 & 90.8 \plusminus 1.4 & \textbf{ 90.5} \plusminus 1.6  \\ 
 & Area  & \color{gray}{ 34.14 \plusminus 0.84  } &  \color{gray}{ 823.3 \plusminus 529.7}  &  610.2 \plusminus 96.0  & 414.42 \plusminus 5.08 & \textbf{408.6} \plusminus 65.8  \\   \midrule 
 \multirow{2}{*}{\begin{tabular}{l}Argoverse \\ Trajectory \end{tabular}} & Cov  & \color{gray}{ 27.9 \plusminus 3.1 } &  92.6 \plusminus 9.2  &  98.8 \plusminus 1.9  & \color{gray}{89.7 \plusminus 0.9} &  \textbf{90.2} \plusminus 0.1 \\ 
 & Area  & \color{gray}{ 127.6 \plusminus 20.9 } &  880.8 \plusminus 156.2  &  396.9 \plusminus 18.67 &   \color{gray}{107.2 \plusminus 9.56}  &  \textbf{126.8}  \plusminus12.22 \\  
\bottomrule
\end{tabular}
}
\vspace{-1em}
\end{table}

\subsection{Real world datasets}
%
\textbf{COVID-19.} We replicate the experiment setting of \cite{Schaar2021conformaltime} and predict new daily cases of COVID-19 in regions of the UK. The models take 100 days of data as input and forecast 50 days into the future. We used 200 time series for training, 100 for calibration, and 80 for testing.

\textbf{Vehicle trajectory prediction.} The Argoverse autonomous vehicle motion forecasting dataset \citep{argoverse} is a widely used vehicle trajectory prediction benchmark. The task is to predict 3 second trajectories based on all vehicle motion in the past 2 seconds sampled at 10Hz. Because trajectory prediction is a challenging task, we utilize a state-of-the-art prediction algorithm LaneGCN \citep{liang2020learning} as the underlying model for both CF-RNN and Copula-RNN (details in Appendix C.1). Flexibility of underlying forecasting model is an advantage of post-hoc UQ methods such as conformal prediction. For model-dependent baselines MC-dropout and BJRNN, we have to train an RNN forecasting model from scratch for each method, which induces additional computational cost.

CopulaCPTS is both more calibrated and efficient compared to baseline models for real-world datasets (Table \ref{tab:baselines_new}). The Covid-19 dataset demonstrates a potential failure case for our model when calibration data are scarce. Because there are only 100 calibration data, CDF and copula estimations are more stochastic depending on the dataset split, resulting in 1 case of invalidity among 3 experiment trials. Even so, CopulaCPTS shows strong performance on average by remaining valid and reducing the confidence width by $33\%$. For the trajectory prediction task, learning the copula results in a $40\%$ sharper confidence region while still remaining valid for the $90\%$ confidence interval. We visualize two samples from each dataset in Figure 3.
The importance of efficiency in these scenarios is clear - the confidence regions need to be narrow enough for them to be useful for decision making. Given the same underlying prediction model, we can see that CopulaCPTS produces a much more efficient region while still remaining valid.

\begin{figure}
\caption{Illustrations of 90\% confidence regions given by CF-RNN (blue) and CopulaCPTS (orange) on two real-world datasets, COVID-19 forecast (left 2) and Argoverse (right 2 at time steps 1, 10, 20, and 30). For the Argoverse data, The red dotted lines (ego agent) and blue dotted lines (other agents) are input to the underlying prediction model and the red solid lines are the prediction output. Note that the confidence region produced CF-RNN is uninformatively large, as it covers all the lanes: these examples illustrate the importance of efficiency. Overall, CopulaCPTS is able to produce much more efficient confidence regions while maintaining valid coverage.}
\label{fig:realworld}
\centering
\begin{subfigure}{.245\linewidth}
  \centering
  \includegraphics[width=\linewidth]{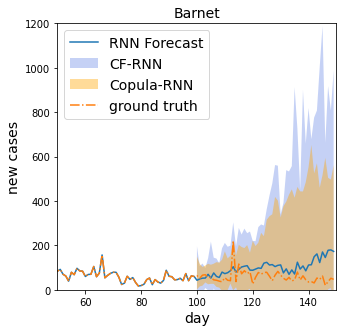}
  \label{fig:cov001}
\end{subfigure}%
\begin{subfigure}{.245\textwidth}
  \centering
  \includegraphics[width=\linewidth]{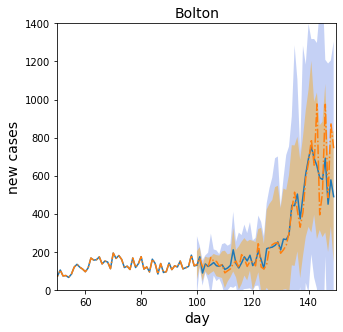}
  \label{fig:cov002}
\end{subfigure}
\begin{subfigure}{.24\textwidth}
  \centering
  \includegraphics[width=\linewidth]{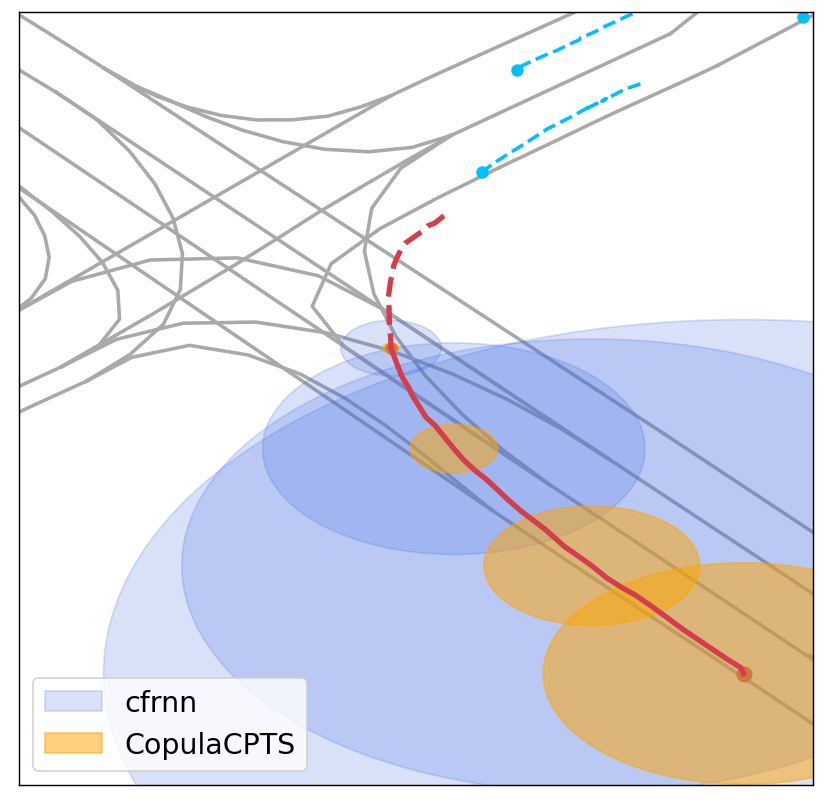}
  \label{fig:conf1}
\end{subfigure}%
\begin{subfigure}{.24\textwidth}
  \centering
  \includegraphics[width=\linewidth]{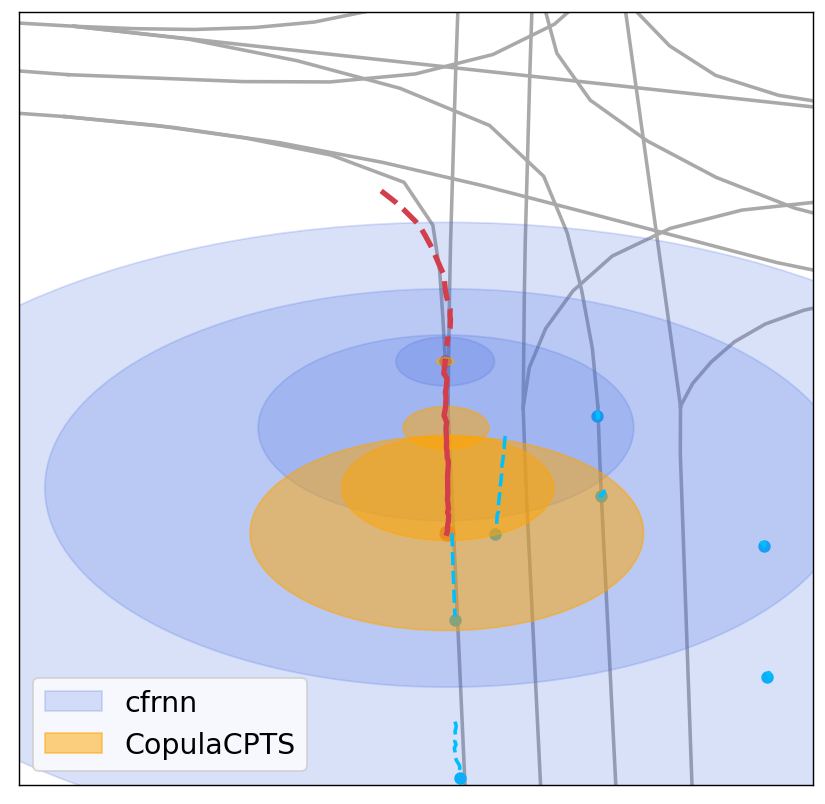}
  \label{fig:conf2}
\end{subfigure}
\vspace{-1em}
\end{figure}

\vspace{-0.3em}
\paragraph{Comparison of models at different horizon lengths.}
\label{horizon}
CopulaCPTS is an algorithm designed to produce calibrated and efficient confidence regions for multi-step time series. When the prediction horizon is long, CopulaCPTS's advantage is more pronounced. Figure \ref{fig:horizon} shows the performance comparison over increasing time horizons on the particle dataset. See Table \ref{tab:baselines} of Appendix \ref{app:exp} for numerical results. CopulaCPTS achieves a $30\%$ decrease in area at 20 time steps compared to CF-RNN, the best performing baseline; the decrease is above $50\%$ at 25 time steps. This experiment shows significant improvement of using copula to model the joint distribution of future time steps.

\vspace{-0.3em}
\paragraph{CopulaCPTS for Autoregressive prediction.}
\label{ar}
The autoregressive extension of CopulaCPTS is illustrated in detail in Appendix \ref{app:autoregressive}. To provide preliminary evidence of effectiveness, we present test results on  the COVID-19 dataset. We train an RNN model with $k=7$ and use it to autoregressively forecast the next 14 steps. Table \ref{tab:autoregressive} compares the performance of re-estimating the copula for each 7-step forecasts versus using a fixed copula calibrated using the first 7 steps. We also compare the model to a 14-step joint forecaster using CopulaCPTS. It is evident that daily cases of the pandemic is a non-stationary time series, where re-estimating the copula is necessary for validity.

\vspace{0.5em}
\begin{minipage}{0.6\textwidth}
\includegraphics[width=\textwidth]{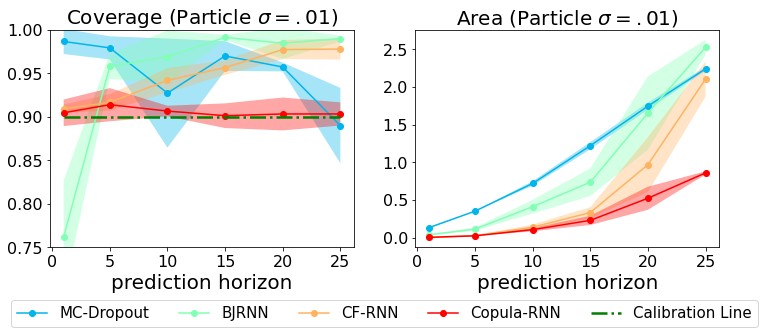}
\captionof{figure}{CopulaCPTS remains more calibrated and efficient than baselines over increasing forecast horizons.}
\label{fig:horizon}
\end{minipage} 
\hspace{0.02\textwidth}
\begin{minipage}{0.38\textwidth}
    \centering
    \begin{adjustbox}{max width=\textwidth}
    \begin{tabular}{c|c|c}
    \toprule
    Method & Coverage & Area \\
          \midrule\midrule
    AR re-estimate  & \textbf{90.1}  & \textbf{89.4} \\
    AR fixed  & \color{gray}{88.2}  & \color{gray}{75.9} \\
    Joint & 90.7 & 102.3\\
         \bottomrule
      \end{tabular}
    \end{adjustbox}
    \captionof{table}{Performance of autoregressive (AR) CopulaCPTS.  Re-estimating copula gives us valid confidence region over time and is more efficient than joint CopulaCPTS forecast.}
    \label{tab:autoregressive}
\vspace{-0.5em}
\end{minipage} 

\vspace{-0.5em}
\section{Conclusion and Discussion}
In this paper, we present CopulaCPTS, a conformal prediction algorithm for multi-step time series prediction. CopulaCPTS significantly improves calibration and efficiency of multi-step conformal confidence intervals by incorporating copulas to model the joint distribution of the uncertainty at each time step. We prove that CopulaCPTS has a finite sample validity guarantee over the entire prediction horizon. Our experiments show that CopulaCPTS produces confidence regions that are (1) valid,  and (2) more efficient than state-of-the-art UQ methods on all 4 benchmark datasets, and over varying prediction horizons. The improvement is especially pronounced when the data dimension is high or the prediction horizon is long, cases when other methods are prone to be highly inefficient. Hence, we argue that our method is a practical and effective way to produce useful uncertainty quantification for machine learning forecasting models. 

The limitations of our algorithm are as follows. As CopulaCPTS requires two calibration steps, it is suitable only when there are abundant data for calibration. The validity proof relies on using the empirical copula, so it does not apply to other learnable copula classes. Future work includes (1) improving the autoregressive extension of CopulaCPTS, to achieve coverage over the whole horizon, and (2)  developing online settings of CopulaCPTS for decision making. 

\section*{Acknowledgement}
This work was supported in part by Army-ECASE award W911NF-23-1-0231, the U.S. Department Of Energy, Office of Science under \#DE-SC0022255, IARPA HAYSTAC Program, CDC-RFA-FT-23-0069, NSF Grants \#2205093, \#2146343, and \#2134274.

We would like to thank Bo Zhao for her helpful comments on the paper.

\bibliography{ref.bib}

\begin{thebibliography}{47}
\providecommand{\natexlab}[1]{#1}
\providecommand{\url}[1]{\texttt{#1}}
\expandafter\ifx\csname urlstyle\endcsname\relax
  \providecommand{\doi}[1]{doi: #1}\else
  \providecommand{\doi}{doi: \begingroup \urlstyle{rm}\Url}\fi

\bibitem[Alaa \& Van Der~Schaar(2020)Alaa and Van Der~Schaar]{bjrnn}
Ahmed Alaa and Mihaela Van Der~Schaar.
\newblock Frequentist uncertainty in recurrent neural networks via blockwise influence functions.
\newblock In \emph{International Conference on Machine Learning}, pp.\  175--190. PMLR, 2020.

\bibitem[Angelopoulos \& Bates(2021)Angelopoulos and Bates]{angelopoulos2021gentle}
Anastasios~N Angelopoulos and Stephen Bates.
\newblock A gentle introduction to conformal prediction and distribution-free uncertainty quantification.
\newblock \emph{arXiv preprint arXiv:2107.07511}, 2021.

\bibitem[Blundell et~al.(2015)Blundell, Cornebise, Kavukcuoglu, and Wierstra]{blundell2015weight}
Charles Blundell, Julien Cornebise, Koray Kavukcuoglu, and Daan Wierstra.
\newblock Weight uncertainty in neural network.
\newblock In \emph{International conference on machine learning}, pp.\  1613--1622. PMLR, 2015.

\bibitem[Chang et~al.(2019)Chang, Lambert, Sangkloy, Singh, Bak, Hartnett, Wang, Carr, Lucey, Ramanan, and Hays]{argoverse}
Ming-Fang Chang, John~W Lambert, Patsorn Sangkloy, Jagjeet Singh, Slawomir Bak, Andrew Hartnett, De~Wang, Peter Carr, Simon Lucey, Deva Ramanan, and James Hays.
\newblock Argoverse: 3d tracking and forecasting with rich maps.
\newblock In \emph{Conference on Computer Vision and Pattern Recognition (CVPR)}, 2019.

\bibitem[Chen et~al.(2014)Chen, Fox, and Guestrin]{chen2014stochastic}
Tianqi Chen, Emily Fox, and Carlos Guestrin.
\newblock Stochastic gradient hamiltonian monte carlo.
\newblock In \emph{International conference on machine learning}, pp.\  1683--1691. PMLR, 2014.

\bibitem[Drouin et~al.(2022)Drouin, Marcotte, and Chapados]{drouin2022tactis}
Alexandre Drouin, {\'E}tienne Marcotte, and Nicolas Chapados.
\newblock Tactis: Transformer-attentional copulas for time series.
\newblock \emph{arXiv preprint arXiv:2202.03528}, 2022.

\bibitem[Efron \& Hastie(2016)Efron and Hastie]{efron2016}
Bradley Efron and Trevor Hastie.
\newblock \emph{Computer Age Statistical Inference}, volume~5.
\newblock Cambridge University Press, 2016.

\bibitem[Eklund et~al.(2015)Eklund, Norinder, Boyer, and Carlsson]{eklund2015application}
Martin Eklund, Ulf Norinder, Scott Boyer, and Lars Carlsson.
\newblock The application of conformal prediction to the drug discovery process.
\newblock \emph{Annals of Mathematics and Artificial Intelligence}, 74\penalty0 (1):\penalty0 117--132, 2015.

\bibitem[Gal \& Ghahramani(2016{\natexlab{a}})Gal and Ghahramani]{gal2016dropout}
Yarin Gal and Zoubin Ghahramani.
\newblock Dropout as a bayesian approximation: Representing model uncertainty in deep learning.
\newblock In \emph{international conference on machine learning}, pp.\  1050--1059. PMLR, 2016{\natexlab{a}}.

\bibitem[Gal \& Ghahramani(2016{\natexlab{b}})Gal and Ghahramani]{gal2016theoretically}
Yarin Gal and Zoubin Ghahramani.
\newblock A theoretically grounded application of dropout in recurrent neural networks.
\newblock \emph{Advances in neural information processing systems}, 29, 2016{\natexlab{b}}.

\bibitem[Gal et~al.(2017)Gal, Hron, and Kendall]{gal2017concrete}
Yarin Gal, Jiri Hron, and Alex Kendall.
\newblock Concrete dropout.
\newblock In \emph{NIPS}, pp.\  3581--3590, 2017.

\bibitem[Gasthaus et~al.(2019)Gasthaus, Benidis, Wang, Rangapuram, Salinas, Flunkert, and Januschowski]{gasthaus2019probabilistic}
Jan Gasthaus, Konstantinos Benidis, Yuyang Wang, Syama~Sundar Rangapuram, David Salinas, Valentin Flunkert, and Tim Januschowski.
\newblock Probabilistic forecasting with spline quantile function {RNN}s.
\newblock In \emph{AISTATS 22}, pp.\  1901--1910, 2019.

\bibitem[Gibbs \& Candes(2021)Gibbs and Candes]{gibbs2021adaptive}
Isaac Gibbs and Emmanuel Candes.
\newblock Adaptive conformal inference under distribution shift.
\newblock \emph{Advances in Neural Information Processing Systems}, 34, 2021.

\bibitem[Graves(2011)]{graves2011practical}
Alex Graves.
\newblock Practical variational inference for neural networks.
\newblock \emph{Advances in neural information processing systems}, 24, 2011.

\bibitem[Johnstone \& Cox(2021)Johnstone and Cox]{johnstone2021conformal}
Chancellor Johnstone and Bruce Cox.
\newblock Conformal uncertainty sets for robust optimization.
\newblock In \emph{Conformal and Probabilistic Prediction and Applications}, pp.\  72--90. PMLR, 2021.

\bibitem[Kan et~al.(2022)Kan, Aubet, Januschowski, Park, Benidis, Ruthotto, and Gasthaus]{kan2022multivariate}
Kelvin Kan, Fran{\c{c}}ois-Xavier Aubet, Tim Januschowski, Youngsuk Park, Konstantinos Benidis, Lars Ruthotto, and Jan Gasthaus.
\newblock Multivariate quantile function forecaster.
\newblock In \emph{International Conference on Artificial Intelligence and Statistics}, pp.\  10603--10621. PMLR, 2022.

\bibitem[Khosravi et~al.(2011)Khosravi, Nahavandi, Creighton, and Atiya]{khosravi2011comprehensive}
Abbas Khosravi, Saeid Nahavandi, Doug Creighton, and Amir~F Atiya.
\newblock Comprehensive review of neural network-based prediction intervals and new advances.
\newblock \emph{IEEE Transactions on neural networks}, 22\penalty0 (9):\penalty0 1341--1356, 2011.

\bibitem[Kim et~al.(2020)Kim, Xu, and Barber]{jackknife+after}
Byol Kim, Chen Xu, and Rina Barber.
\newblock Predictive inference is free with the jackknife+-after-bootstrap.
\newblock In H.~Larochelle, M.~Ranzato, R.~Hadsell, M.F. Balcan, and H.~Lin (eds.), \emph{Advances in Neural Information Processing Systems}, volume~33, pp.\  4138--4149. Curran Associates, Inc., 2020.

\bibitem[Kingma et~al.(2015)Kingma, Salimans, and Welling]{kingma2015variational}
Durk~P Kingma, Tim Salimans, and Max Welling.
\newblock Variational dropout and the local reparameterization trick.
\newblock \emph{Advances in neural information processing systems}, 28, 2015.

\bibitem[Kipf et~al.(2018)Kipf, Fetaya, Wang, Welling, and Zemel]{nridataset}
Thomas Kipf, Ethan Fetaya, Kuan-Chieh Wang, Max Welling, and Richard Zemel.
\newblock Neural relational inference for interacting systems.
\newblock \emph{arXiv preprint arXiv:1802.04687}, 2018.

\bibitem[Kivaranovic et~al.(2020)Kivaranovic, Johnson, and Leeb]{kivaranovic2020adaptive}
Danijel Kivaranovic, Kory~D Johnson, and Hannes Leeb.
\newblock Adaptive, distribution-free prediction intervals for deep networks.
\newblock In \emph{AISTATS}, pp.\  4346--4356, 2020.

\bibitem[Lakshminarayanan et~al.(2017)Lakshminarayanan, Pritzel, and Blundell]{lakshminarayanan2017simple}
Balaji Lakshminarayanan, Alexander Pritzel, and Charles Blundell.
\newblock Simple and scalable predictive uncertainty estimation using deep ensembles.
\newblock \emph{Advances in neural information processing systems}, 30, 2017.

\bibitem[Lei \& Wasserman(2012)Lei and Wasserman]{lei2012distribution}
Jing Lei and Larry Wasserman.
\newblock Distribution free prediction bands.
\newblock \emph{arXiv preprint arXiv:1203.5422}, 2012.

\bibitem[Lei et~al.(2018)Lei, G’Sell, Rinaldo, Tibshirani, and Wasserman]{lei2018distribution}
Jing Lei, Max G’Sell, Alessandro Rinaldo, Ryan~J Tibshirani, and Larry Wasserman.
\newblock Distribution-free predictive inference for regression.
\newblock \emph{Journal of the American Statistical Association}, 113\penalty0 (523):\penalty0 1094--1111, 2018.

\bibitem[Liang et~al.(2020)Liang, Yang, Hu, Chen, Liao, Feng, and Urtasun]{liang2020learning}
Ming Liang, Bin Yang, Rui Hu, Yun Chen, Renjie Liao, Song Feng, and Raquel Urtasun.
\newblock Learning lane graph representations for motion forecasting.
\newblock In \emph{European Conference on Computer Vision}, pp.\  541--556. Springer, 2020.

\bibitem[Louizos \& Welling(2017)Louizos and Welling]{louizos2017multiplicative}
Christos Louizos and Max Welling.
\newblock Multiplicative normalizing flows for variational bayesian neural networks.
\newblock In \emph{International Conference on Machine Learning}, pp.\  2218--2227. PMLR, 2017.

\bibitem[Luo et~al.(2021)Luo, Zhao, Kuck, Ivanovic, Savarese, Schmerling, and Pavone]{luo2021sample}
Rachel Luo, Shengjia Zhao, Jonathan Kuck, Boris Ivanovic, Silvio Savarese, Edward Schmerling, and Marco Pavone.
\newblock Sample-efficient safety assurances using conformal prediction.
\newblock \emph{arXiv preprint arXiv:2109.14082}, 2021.

\bibitem[Messoudi et~al.(2021)Messoudi, Destercke, and Rousseau]{messoudi2021copula}
Soundouss Messoudi, S{\'e}bastien Destercke, and Sylvain Rousseau.
\newblock Copula-based conformal prediction for multi-target regression.
\newblock \emph{Pattern Recognition}, 120:\penalty0 108101, 2021.

\bibitem[Messoudi et~al.(2022)Messoudi, Destercke, and Rousseau]{messoudi2022ellipsoidal}
Soundouss Messoudi, S{\'e}bastien Destercke, and Sylvain Rousseau.
\newblock Ellipsoidal conformal inference for multi-target regression.
\newblock In \emph{Conformal and Probabilistic Prediction with Applications}, pp.\  294--306. PMLR, 2022.

\bibitem[Neal(2012)]{neal2012bayesian}
Radford~M Neal.
\newblock \emph{Bayesian learning for neural networks}, volume 118.
\newblock Springer Science \& Business Media, 2012.

\bibitem[Park et~al.(2022)Park, Maddix, Aubet, Kan, Gasthaus, and Wang]{park2022learning}
Youngsuk Park, Danielle Maddix, Fran{\c{c}}ois-Xavier Aubet, Kelvin Kan, Jan Gasthaus, and Yuyang Wang.
\newblock Learning quantile functions without quantile crossing for distribution-free time series forecasting.
\newblock In \emph{International Conference on Artificial Intelligence and Statistics}, pp.\  8127--8150. PMLR, 2022.

\bibitem[Romano et~al.(2019)Romano, Patterson, and Candes]{romano2019conformalized}
Yaniv Romano, Evan Patterson, and Emmanuel Candes.
\newblock Conformalized quantile regression.
\newblock \emph{Advances in neural information processing systems}, 32, 2019.

\bibitem[Ruschendorf(1976)]{empiricalcop1}
Ludger Ruschendorf.
\newblock Asymptotic distributions of multivariate rank order statistics.
\newblock \emph{The Annals of Statistics}, pp.\  912--923, 1976.

\bibitem[Sakai et~al.(2018)Sakai, Ingram, Dinius, Chawla, Raffin, and Paques]{pythonrobotics}
Atsushi Sakai, Daniel Ingram, Joseph Dinius, Karan Chawla, Antonin Raffin, and Alexis Paques.
\newblock Pythonrobotics: a python code collection of robotics algorithms.
\newblock \emph{CoRR}, abs/1808.10703, 2018.

\bibitem[Salinas et~al.(2019)Salinas, Bohlke-Schneider, Callot, Medico, and Gasthaus]{salinas2019high}
David Salinas, Michael Bohlke-Schneider, Laurent Callot, Roberto Medico, and Jan Gasthaus.
\newblock High-dimensional multivariate forecasting with low-rank gaussian copula processes.
\newblock \emph{Advances in neural information processing systems}, 32, 2019.

\bibitem[Sousa et~al.(2022)Sousa, Tom{\'e}, and Moreira]{sousa2022general}
Martim Sousa, Ana~Maria Tom{\'e}, and Jos{\'e} Moreira.
\newblock A general framework for multi-step ahead adaptive conformal heteroscedastic time series forecasting.
\newblock \emph{arXiv preprint arXiv:2207.14219}, 2022.

\bibitem[Stankevi{\v{c}}i{\={u}}t{\.{e}} et~al.(2021)Stankevi{\v{c}}i{\={u}}t{\.{e}}, Alaa, and van~der Schaar]{Schaar2021conformaltime}
Kamil{\.{e}} Stankevi{\v{c}}i{\={u}}t{\.{e}}, Ahmed Alaa, and Mihaela van~der Schaar.
\newblock Conformal time-series forecasting.
\newblock In \emph{Advances in Neural Information Processing Systems}, 2021.

\bibitem[Tagasovska \& Lopez-Paz(2019)Tagasovska and Lopez-Paz]{tagasovska2019single}
Natasa Tagasovska and David Lopez-Paz.
\newblock Single-model uncertainties for deep learning.
\newblock In \emph{NeurIPS}, pp.\  6417--6428, 2019.

\bibitem[Takeuchi et~al.(2006)Takeuchi, Le, Sears, Smola, et~al.]{takeuchi2006nonparametric}
Ichiro Takeuchi, Quoc Le, Timothy Sears, Alexander Smola, et~al.
\newblock \emph{Nonparametric quantile estimation}.
\newblock MIT Press, 2006.

\bibitem[Vovk et~al.(2005)Vovk, Gammerman, and Shafer]{vovk2005algorithmic}
Vladimir Vovk, Alexander Gammerman, and Glenn Shafer.
\newblock \emph{Algorithmic learning in a random world}.
\newblock Springer Science \& Business Media, 2005.

\bibitem[Vovk et~al.(2017)Vovk, Shen, Manokhin, and Xie]{vovk2017nonparametric}
Vladimir Vovk, Jieli Shen, Valery Manokhin, and Min-ge Xie.
\newblock Nonparametric predictive distributions based on conformal prediction.
\newblock In \emph{Conformal and probabilistic prediction and applications}, pp.\  82--102. PMLR, 2017.

\bibitem[Wang et~al.(2019)Wang, Lu, Yan, Luo, Li, Zheng, and Zhang]{wang2019deep}
Bin Wang, Jie Lu, Zheng Yan, Huaishao Luo, Tianrui Li, Yu~Zheng, and Guangquan Zhang.
\newblock Deep uncertainty quantification: A machine learning approach for weather forecasting.
\newblock In \emph{Proceedings of the 25th ACM SIGKDD International Conference on Knowledge Discovery \& Data Mining}, pp.\  2087--2095, 2019.

\bibitem[Welling \& Teh(2011)Welling and Teh]{welling2011bayesian}
Max Welling and Yee~W Teh.
\newblock Bayesian learning via stochastic gradient langevin dynamics.
\newblock In \emph{ICML}, pp.\  681--688, 2011.

\bibitem[Wu et~al.(2021)Wu, Gao, Chinazzi, Xiong, Vespignani, Ma, and Yu]{wu2021quantifying}
Dongxia Wu, Liyao Gao, Matteo Chinazzi, Xinyue Xiong, Alessandro Vespignani, Yi-An Ma, and Rose Yu.
\newblock Quantifying uncertainty in deep spatiotemporal forecasting.
\newblock In \emph{Proceedings of the 27th ACM SIGKDD Conference on Knowledge Discovery \& Data Mining}, pp.\  1841--1851, 2021.

\bibitem[Xu \& Xie(2021)Xu and Xie]{xu2021conformal}
Chen Xu and Yao Xie.
\newblock Conformal prediction interval for dynamic time-series.
\newblock In \emph{International Conference on Machine Learning}. PMLR, 2021.

\bibitem[Zadrozny \& Elkan(2001)Zadrozny and Elkan]{zadrozny2001obtaining}
Bianca Zadrozny and Charles Elkan.
\newblock Obtaining calibrated probability estimates from decision trees and naive bayesian classifiers.
\newblock In \emph{Icml}, volume~1, pp.\  609--616. Citeseer, 2001.

\bibitem[Zaffran et~al.(2022)Zaffran, F{\'e}ron, Goude, Josse, and Dieuleveut]{zaffran2022adaptive}
Margaux Zaffran, Olivier F{\'e}ron, Yannig Goude, Julie Josse, and Aymeric Dieuleveut.
\newblock Adaptive conformal predictions for time series.
\newblock In \emph{International Conference on Machine Learning}, pp.\  25834--25866. PMLR, 2022.

\end{thebibliography}
\bibliographystyle{iclr2024_conference}

\appendix
\clearpage

\section{Proof of Theorem \ref{lem:copulavalid}}
\label{app:proof}
\begin{theorem}[Validity of CopulaCPTS]
The confidence region provided by CopulaCPTS (algorithm \ref{alg:cpts}) is valid. i.e. $ \Prob[\: \forall j \in \{1,\ldots, k\}, \: y_{t+j} \in \Gamma^{1-\alpha}_j\: ] \geq 1-\alpha$.
\end{theorem}

\begin{proof}
Define notations to be the same as in Section 4. Let $\mathcal{D} = \{z^i = (x^i, y^i)\}_{i=1}^n$ be a dataset with input $x^i \in \R^{t\times d}$, a time series with length $t$, and output $y^i \in \R^{k \times d}$ a time series with length $k$. Each data sample (of entire time series, not time step) $z^i = (x^i, y^i)$ is drawn i.i.d. from an unknown distribution $\mathcal{Z}$. This means that any other sample drawn $\mathcal{Z}$ is \textit{exchangeable} with $\mathcal{D}$. from Dataset $\mathcal{D} $ is divided into training set $\Dtrain$ and two calibration sets $\Dcalone$ and $\Dcaltwo$. 


We have nonconformity score function $A$ with prediction model $\hat{f}$ trained on $\Dtrain$. For each data point $z^i =  (x^i,y^i) \in \Dcal$, we calculate the nonconformity score for each time step $j$, concatenating them into a vector $s^i$ of dimension $k$.
\begin{equation}
  s^i_j = A(z^i)_j \stackrel{\text{e.g.}}{=} \|y^i_j - \hat{f}(x^i)_j \| , \; j = 1, \ldots, k
\label{def:non-c-proof}
\end{equation}
Let $\gS1 = \{s^i\}_{z^i \in\Dcalone}$ be the set of nonconformity scores of data in $\Dcalone$ and $\gS2 = \{s^i\}_{z^i \in\Dcaltwo}$ the set of nonconformity scores of data in $\Dcaltwo$. Subscript $j$ will be used to index the set of specific time steps of the scores:  $\gS1_j = \{s^i_j\}_{z^i \in\Dcalone}$,  $\gS2_j = \{s^i_j\}_{z^i \in\Dcaltwo}$.

\paragraph{CDF Estimation on $\Dcalone$.} 
%
    \label{eq:quantile_proof}


We use $\Dcalone$ to build conformal predictive distributions (CPD) \citep{vovk2017nonparametric} for each time step's anomaly scores. The cumulative distribution function is constructed as:
\begin{equation}
      \hat{F}_j(s_j) := \frac{1}{|\gS1_j|+1} \sum_{s^i \in \gS1_j 
      \cup \{\infty\} } \mathbbm{1}_{s^i_j < s_j} ,  \quad  \text{for } j \in \{1, \ldots, k\}
      \label{eq:emp-cdf}
\end{equation}


\begin{lemma}[Validity of CPD. Theorem 11 of \cite{vovk2017nonparametric} ]
    Given a nonconformity score function $ A: \mathcal{Z} \rightarrow \mathbb{R} $ and a data sample $ z \sim \gZ$, calculate the nonconformity score as $s = A(z)$. Then, the distribution $\hat{F}_j(\cdot)$ is valid in the sense that  $\Prob_{\gZ}[\hat{F}_j(s_j)\leq 1- \alpha] = 1-\alpha$, for any $0<\alpha<1$ .
    \label{lem:emp-cdf}
\end{lemma}
%

\paragraph{Copula Calibration on $\Dcaltwo$.} Next, for every data point $\Dcaltwo$, we calculate 
\begin{equation*}
    \mathcal{U} = \{\rvu^i \}_{ i \in \Dcaltwo }, \quad \rvu^i = (u^i_1,  \ldots, u^i_k) = \big( \hat{F}_1(s^i_1), \ldots, \hat{F}_k(s^i_k) \big)
\end{equation*}

Each $\rvu^i$ can be seen as a \textit{multivariate nonconformity score} for data sample $z^i$.  We will now illustrate that an empirical copula on $\mathcal{U}$ is a rank statistic, and hence we can apply the proof of conformal prediction to prove a finite sample validity guarantee. 

\begin{definition}[Vector partial order] Define a partial order for $k$-dimensional vectors $\preceq$.
\begin{equation}
    \rvu \preceq \rvv \quad \text{i.f.f.} \quad \forall j \in \{1, \ldots, k \}, \; \rvu_j \leq \rvv_j \label{eq:order}
\end{equation} 
\begin{equation}
 \text{i.e.} \;\; \rvu \preceq \rvv \Longleftrightarrow
 \prod_{j=1}^{k} \mathbbm{1}_{\rvu_j \leq \rvv_j} 
\end{equation}
\end{definition} 

Next, we define an empirical multivariate quantile function for $\mathcal{U}$, a set of $k$-dimensional vectors, based on the partial order defined in \Eqref{eq:order}. \footnote{There exist other definitions of multivariate quantiles, but they cannot be used in place of our definition in this proof. We have chosen this form because it connects directly to the empirical copula.} 
\begin{align}
     \hat{\mathbf{Q}}(1-\alpha, \mathcal{U} ) =  \argmin_{\rvu^*} \sum_{j=1}^k \rvu^*_j \quad \text{s.t.}  \; \big(  \frac{1}{|\mathcal{U}|} \sum_{\mathbf{u} \in \mathcal{U}} \mathbbm{1}_{\rvu \preceq \rvu ^*} \big) \geq 1-\alpha
  \label{eq:bigq}
\end{align}
  
The empirical copula formula in CopulaCPTS (\Eqref{eq:empiricalcopula} in section 4.1) is the same as the expression inside the inf function of $Q(1-\alpha, \mathcal{U} \cup \infinityvec)$. Therefore, finding $s^*_1, \ldots, s^*_k$ by Equation \ref{eq:search} implies:
\[
\hat{\mathbf{Q}}(1-\alpha, \mathcal{U} \cup \infinityvec) = z
\]

The rest of the proof follows that of Inductive Conformal Prediction (ICP) in \cite{vovk2005algorithmic}.

Given a test data sample $z^{n+1} = (x^{n+1}_{1:t}, y^{n+1}_{1:k}) \sim \mathcal{Z}$, we want to prove that the confidence regions $\Gamma^{1-\alpha}_1, \ldots, \Gamma^{1-\alpha}_k$ output by CopulaCPTS satisfies:
\[ 
\Prob[y_{j} \in \Gamma^{1-\alpha}_j\: ] \geq 1-\alpha, \quad \forall j \in \{1,\ldots, k\}
\]

We first calculate 
\[
\rvu^{n+1}_j = \hat{F}_j(A(z^{n+1})_j) \; \text{ for } j \in \{1,\ldots, k\}
\]

Let $ \rvu^* = Q(1-\alpha,  \mathcal{U}\cup \{ \boldsymbol{\infty}\})$, $\rvu^* \in [0,1]^k$. An important observation for the conformal prediction proof is that if $\rvu^* \preceq \rvu^{n+1}$, then 
\[
\hat{\mathbf{Q}}(1-\alpha, \mathcal{U} \cup \{ \boldsymbol{\infty}\}) = \hat{\mathbf{Q}}(1-\alpha,  \mathcal{U}\cup \{ \rvu^{n+1} \} )
\]
the quantile remains unchanged. This fact can be re-written as

\[
\rvu^{n+1} \preceq \hat{\mathbf{Q}}(1-\alpha,  \mathcal{U}\cup \{\boldsymbol{\infty} \}) \Longleftrightarrow \rvu^{n+1} \preceq \hat{\mathbf{Q}}(1-\alpha,  \mathcal{U} \cup \{ \rvu^{n+1} \}) 
\]

The above describes the condition where $\rvu^{n+1}$ is among the $\ceil{(1-\alpha)(n+1)}$ smallest of $ \mathcal{U}$. By exchangability, the probability of $\rvu^{n+1}$'s rank among $ \mathcal{U}$ is uniform. Therefore,

\[
\Prob [\rvu^{n+1} \preceq \hat{\mathbf{Q}}(p,  \mathcal{U} \cup \{ \boldsymbol{\infty}\})]  =  \frac{\ceil{(1-\alpha)(|\mathcal{U}|+1)}}{(|\mathcal{U}|+1)} \geq 1-\alpha
\]

Hence we have 

\begin{equation}
\Prob [\rvu^{n+1} \preceq \hat{\mathbf{Q}}(1-\alpha,  \mathcal{U}\cup \{ \boldsymbol{\infty}\})] \geq 1-\alpha 
\label{eq:proof_conformal_step}
\end{equation}

Note again that:
\begin{itemize}
    \item $\rvu^* = \hat{\mathbf{Q}}(1-\alpha,  \mathcal{U}\cup \{ \boldsymbol{\infty}\}) =(\hat{F}_1(s^*_1), \ldots, \hat{F}_t(s^*_k))$
    \item  $\mathbf{u}^{n+1} = (\hat{F}_1(s^{n+1}_1), \ldots, \hat{F}_t(s^{n+1}_k))$
    \item The uncertain regions are constructed as (Algorithm 1, line 17):
    \begin{equation}
        \Gamma^{1-\alpha}_j \leftarrow \{ \mathbf{y} : \;  \|\mathbf{y} - \hat{\mathbf{y}}^{n+1}_j \| < s^*_j \}
        \label{eq:construct}
    \end{equation} 
\end{itemize} 
By definition of $\preceq$, we have 

\begin{align}
& \rvu^{n+1}\preceq  \rvu^* \\
\stackrel{\text{(\ref{eq:order})}}{\Longleftrightarrow }  \;  &
 \forall j \in \{1, \ldots, k \}, \; \rvu^{n+1}_j \leq \rvu^*_j \\
\stackrel{\text{Lemma \edit{\ref{lem:emp-cdf}}}}{\Longrightarrow} \; & 
  \forall j \in \{1, \ldots, k \}, \; s^{n+1}_j \leq s^*_j \\
\stackrel{\text{(\ref{eq:construct})}}{\Longleftrightarrow}  \; & 
    \forall j \in \{1,\ldots, k\}, \: \mathbf{y}_j \in \Gamma^{1-\alpha}_j
\label{eq:proof_calibration_step}
\end{align}

Combining  \Eqref{eq:proof_conformal_step} and \Eqref{eq:proof_calibration_step}, we have

\begin{equation}
 \Prob [\: \forall j \in \{1,\ldots, k\}, \: \mathbf{y}_j \in \Gamma^{1-\alpha}_j\: ] \geq \Prob [\rvu^{n+1} \preceq \hat{\mathbf{Q}}(1-\alpha,  \mathcal{U}\cup \{ \boldsymbol{\infty}\})] \geq 1-\alpha 
\end{equation} 

\end{proof}

\section{Additional Algorithm Details}

\subsection{Upper and Lower bounds for Copulas}
\label{app:bounds}
To provide a better understanding of the properties of Copulas, consider the Frechet-Hoeffding Bounds (Theorem \ref{lem:bounds}). In fact, the Frechet-Hoeffding upper- and lower- bounds are both copulas. The lower bound is precisely the Bonferroni correction used in \cite{Schaar2021conformaltime} - therefore by estimating the copula more precisely instead of using a lower bound, we have a guaranteed efficiency improvement for the confidence region. 


\begin{theorem}[The Frechet-Hoeffding Bounds]
Consider a copula $C(u_1, \ldots, u_k)$. Then
\[
\max{\{1-k+\sum_{i=1}^k u_i, 0\} } \leq C(u_1, \ldots, u_k) \leq \min{ \{u_1, \ldots, u_k \}}
\]
\label{lem:bounds}
\end{theorem}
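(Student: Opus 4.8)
The plan is to exploit the probabilistic characterization of a copula recalled earlier in the paper, namely that a $k$-dimensional copula $C:[0,1]^k\to[0,1]$ is a CDF with uniform marginals. Concretely, I would let $(U_1,\ldots,U_k)$ be a random vector whose joint distribution function is $C$, so that each $U_j$ is uniform on $[0,1]$ and $C(u_1,\ldots,u_k)=\Prob[U_1\le u_1,\ldots,U_k\le u_k]$. Both inequalities then reduce to elementary monotonicity and sub-additivity properties of the probability measure $\Prob$, so no analytic machinery is needed beyond the definition.

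For the upper bound, I would first observe that the joint event $\bigcap_{j=1}^{k}\{U_j\le u_j\}$ is contained in each single event $\{U_j\le u_j\}$. Monotonicity of $\Prob$ then gives $C(u_1,\ldots,u_k)\le \Prob[U_j\le u_j]=u_j$ for every $j$, where the last equality uses that the $j$-th marginal is uniform. Taking the minimum over $j\in\{1,\ldots,k\}$ yields $C(u_1,\ldots,u_k)\le \min\{u_1,\ldots,u_k\}$.

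For the lower bound, I would pass to complements and apply the union bound (Boole's inequality). Writing $C(u_1,\ldots,u_k)=1-\Prob\big[\bigcup_{j=1}^{k}\{U_j> u_j\}\big]$ and bounding the union by $\sum_{j=1}^{k}\Prob[U_j> u_j]=\sum_{j=1}^{k}(1-u_j)=k-\sum_{j=1}^{k}u_j$, I obtain $C(u_1,\ldots,u_k)\ge 1-k+\sum_{j=1}^{k}u_j$. Since $C$ is itself a probability and hence nonnegative, combining this estimate with $C\ge 0$ gives $C(u_1,\ldots,u_k)\ge \max\{1-k+\sum_{j=1}^{k}u_j,\,0\}$, which completes the argument.

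There is no deep obstacle here; the only point requiring care is the justification that a copula admits the representation as the joint CDF of a uniform random vector, which I would take directly from the definition used in the paper. I would also note that the surrounding remark — that the upper and lower bounds are themselves copulas — is a strictly stronger assertion (subtle already in dimension $k\ge 3$) that is not needed for the stated inequality, so I would not pursue it in this proof.
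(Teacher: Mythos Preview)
Your argument is correct and is the standard textbook proof of the Fr\'echet--Hoeffding bounds. Note, however, that the paper does not actually prove this theorem: it is stated in the appendix as a known result (to contextualize why the Bonferroni correction used by CF-RNN is a lower bound for any copula), so there is no paper proof to compare against. Your probabilistic derivation via monotonicity for the upper bound and Boole's inequality for the lower bound is exactly the canonical justification one would cite here, and your closing caveat --- that the sharpness claim (the bounds being copulas themselves, delicate for $k\ge 3$) is a separate and stronger statement not required for the inequality --- is well placed.
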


\subsection{Numerical optimization with SGD for search}
\label{app:sgd}
We continue to use the notation defined in Appendix \ref{app:proof}. The inverse of the predictive distributions (Equation \ref{eq:emp-cdf}) can be written as follows, similar to the empirical quantile function  (Equation \ref{eq:quantile}).
\begin{equation}
    \hat{F}^{-1}_j(p) := \inf\{s_j: (\frac{1}{|\gS1_j|+1} \sum_{s^i \in \gS1_j 
      \cup \{\infty\} } \mathbbm{1}_{s^i_j < s_j}) \geq p \}
    \label{eq:inverse_cdf}
\end{equation}

We find the optimal $s_j^*$ in Equation \ref{eq:search} and Algorithm \ref{alg:cpts} by minimizing the following loss: 

\[
\gL(s_1, \ldots, s_k) = \frac{1}{|\Dcaltwo|} \sum_{i\in \Dcaltwo} \prod_{j=1}^k \mathbbm{1}\left[ \mathbf{u}^i_j < \hat{F}_j^{-1}(s_j) \right] - (1-\alpha) 
\]

The indicator function is implemented using a sigmoid function whose input is multiplied by a constant for differentiability. A small amount of L2 regularization is added to the loss function to ensure the searched scores are as low as possible. We use the Adam optimizer and perform gradient descent for 500 steps to get the final result. The optimization process to find $\mathbf{s}^*$ typically takes a few seconds on CPU. For each run of our experiments, the calibration and prediction steps of CopulaCPTS combined took less than 1 minute to run on an Apple M1 CPU. Please refer to the \texttt{CP} class in the reference code for implementation details.


\subsection{CopulaCPTS in  auto-regressive forecasting }
\label{app:autoregressive}
Auto-regressive forecasting is a common framework in time series forecasting. So far, we have been looking at forecasts for a predetermined number of time steps $k$. One can use a fixed-length model to forecast variable horizons $k'$ autoregressively, taking the model output as part of the input. In the conformal prediction setting, we want not only to autoregressively use the point forecasts, but also to propagate the uncertainty measurement. 

If the time series and uncertainty are \textit{stationary} (for example additive Gaussian noise), the copula remains the same for any sliding window of $k$ steps, i.e. $C(u_1, \ldots, u_k) = C(u_2, \ldots, u_{k+1})$.  Therefore, after finding $(u_1^*, \ldots, u_k^*)$ such that $C(u_1^*, \ldots, u_k^*) \geq 1-\alpha$, we can simply search for $u_{k+1}^*$ such that $C(u_2^*, \ldots, u_k^*, u_{k+1}^*) \geq 1-\alpha$. The guarantee proven in Theorem \ref{lem:copulavalid} still holds for the new estimate. In this way, we can achieve the coverage guarantee over the entire autoregressive forecasting horizon. 

On the other hand, if the time series is \textit{non-stationary}, we need to fit copulas $C_1(u_1, \ldots, u_k)$, $C_2(u_2, \ldots, u_{k+1})$, $\ldots, C_{k'-k}(u_{k'-k}, \ldots, u_{k'})$, one for each autoregressive prediction, which requires a calibration set with $\geq k'$ time steps. The $k'$ step  autoregressive problem is then reduced to $k'-k$ multi-step forecasting problems that can be solved by CopulaCPTS. It follows that each of the autoregressive predictions is valid. Appendix \ref{lab:autoregressive} provides an example scenario where re-estimating the copula is necessary for validity.

\subsection{Autoregressive prediction}
\label{lab:autoregressive}
In the context of this paper to forecast autoregressively is given input $\textbf{x}_{1:t}$ and a $k$ step forecasting model $\hat{f}$, perform prediction

\begin{align*}
    \hat{\textbf{y}}_{t+1:t+k} &= \hat{f}(\textbf{x}_{1:t}) \\
    \hat{\textbf{y}}_{t+2:t+k+1} &= \hat{f}(\textbf{x}_{2:t},  \hat{\textbf{y}}_{t+1}) \\
    & \cdots
\end{align*}

until all $k'$ time steps are predicted. 

We now provide a toy scenario to illustrate when re-estimating the copula is necessary and improves validity. Consider a time series of three time steps $t_0, t_1, t_2$. The two scenarios are illustrated in Figure \ref{fig:ar}. In both scenarios, the mean and variance of all time steps are $0$ and $1$ respectively.  In scenario (a), $t_0 = t_1$ and hence their covariance is $1$. The copula estimated on $t_0$ and $t_1$ is $C_{0:1}(F(t_0), F(t_1)) = F(t_0) = F(t_1)$.  This copula will significantly underestimate the confidence region of  $t_2$ where its covariance with $t_1$ is $-1$. In fact the coverage of $C_{0:1}(F_1(t_1), F_2(t_2)) =0.74 $. On the other hand, (b) illustrates a scenario where the copula for any 2 consecutive time series remains the same $C_0=C_1$. In this case, applying $C_0$ directly to forecast $C_1$ achieves precisely 90\% coverage.

\begin{figure}[h!]
\centering
\begin{subfigure}[b]{0.4\linewidth}
  \includegraphics[width=\linewidth]{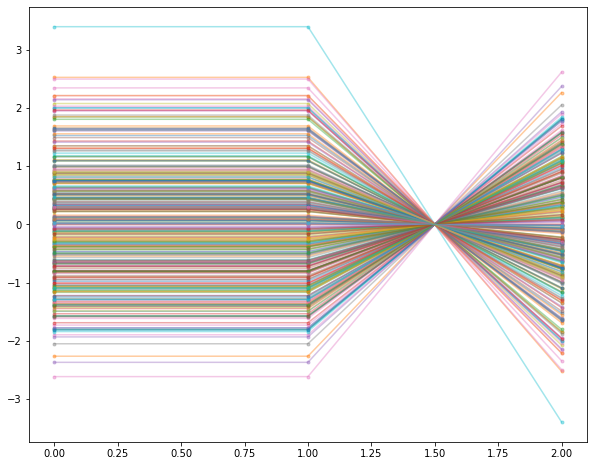}
  \caption{Time steps 0 and 1 are positively correlated while 1 and 2 are negatively correlated}
  \label{fig:cal001}
\end{subfigure}
\hspace{0.8em}
\begin{subfigure}[b]{0.4\linewidth}
  \includegraphics[width=\linewidth]{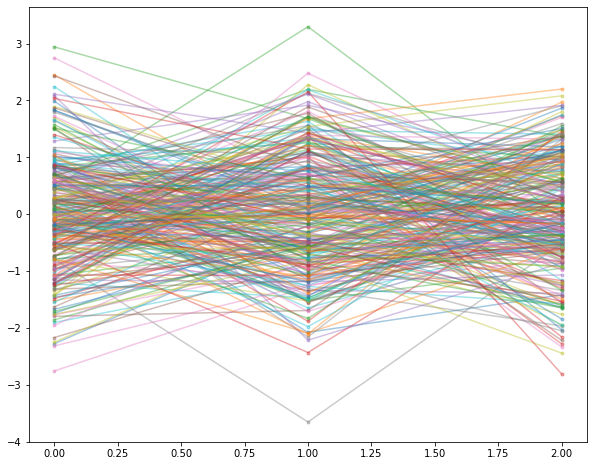}
  \caption{Stationary time series where each time steps are uncorrelated}
  \label{fig:cal002}
\end{subfigure}%
\caption{Two scenarios to illustrate the autoregressive case}
\label{fig:ar}
\end{figure}

\section{Experiment Details and additional results}
\label{app:exp}

\subsection{Underlying forecasting models}

\paragraph{Particle Dataset.}
The underlying forecasting model for the particle experiments is an 1-layer LSTM network with embedding size $=24$. The hidden state is then passed through a linear network to forecast the timesteps concurrently (output has dimension $k \times d_y$).  We train the model for 150 epochs with batch size 128. Hyperparameters of the network are selected through a model search by performance on a 5-fold cross-validation split of the dataset. The architecture and hyperparameters are shared for all baselines and CopulaCPTS in Table 1.

\paragraph{Drone.} For the drone trajectory forecasting task, we use the same LSTM forecasting network as the particle dataset, but with its hidden size increased to 128.  We train the model for 500 epochs with batch size 128. The same architecture and hyperparameters are shared for all baselines and CopulaCPTS reported in Table 1. 

\paragraph{Covid-19.} The COVID-19 dataset is downloaded directly from the official UK government website \texttt{https://coronavirus.data.gov.uk/details/download} by selecting \textit{region} for area type and \textit{newCasesByPublishDate} for metric. There are in total 380 regions and over 500 days of data, depending on when it is downloaded. We selected 150-day time series from the collection to construct our dataset.

The base forecasting model for Covid-19 dataset is the same as the model for synthetic datasets, with hidden size $=128$, and were trained for 150 epochs with batch size 128. The same architecture and hyperparameters are shared for all baselines and CopulaCPTS reported in Table 1. 

\paragraph{Argoverse.} As highlighted in the main text, we utilize a state-of-the-art prediction algorithm LaneGCN \citep{liang2020learning} as the underlying forecaster model for CF-RNN and Copula-RNN. We refer the readers to their paper and code base for model details. The architecture of the RNN network used for MC-Dropout and BJRNN is an Encoder-Decoder network. Both the encode and decoder contain a LSTM layer with encoding size 8 and hidden size 16. We chose this architecture because the is part of the official Argoverse baselines (\texttt{https://github.com/jagjeet-singh/argoverse-forecasting}) and demonstrates competitive performance.  
\subsection{Calibration and Efficiency chart for COVID-19}
Figure \ref{fig:covid-day} shows a comparison of calibration and efficiency for the daily new COVID 19 cases forecasting. 
\begin{figure}[h!]
    \centering
    \includegraphics[width=\linewidth]{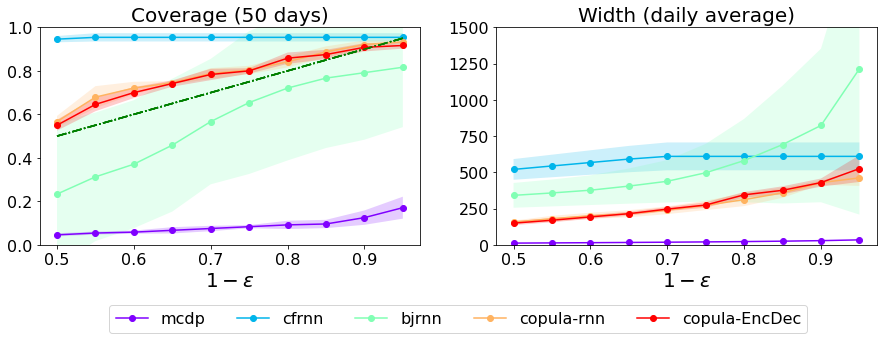}
    \caption{Calibration and efficiency comparison on different $\epsilon$ level for COVID-19 Daily Forecasts. The copula methods (orange and red lines) are more calibrated (coinciding with the green dotted line) and sharp (low width) compared to baselines.}
    \label{fig:covid-day}
\end{figure}

To see if the daily fluctuation due to testing behavior disrupts other methods, we also ran the same experiment on weekly aggregated new cases forecast. We take 14 weeks of data as input and output forecasts for the next 6 weeks. The results are illustrated in Figure \ref{fig:covid-week}. The weekly forecasting scenario gives us similar insights as the daily forecasts. 

\begin{figure}[h!]
    \centering
    \includegraphics[width=0.8\linewidth]{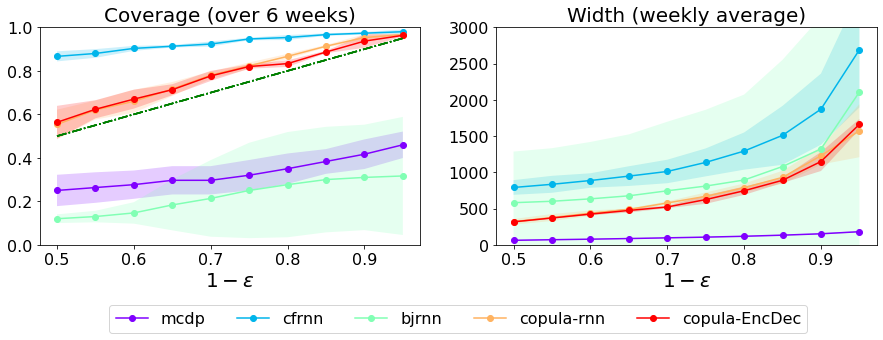}
    \caption{Covid Weekly Forecasts}
    \label{fig:covid-week}
\end{figure}

\subsection{Argoverse}
The Argoverse autonomous vehicle dataset contains 205,942 samples, consisting of diverse driving scenarios from Miami and Pittsburgh.  The data can be downloaded from the official Argoverse dataset website. We split 90/10 into a training set and validation set of size 185,348 and 20,594 respectively. The official validation set of size 39,472 is used for testing and reporting performance. We preprocess the scenes to filter out incomplete trajectories and cap the number of vehicles modeled to 60. If there are less than 60 cars in the scenario, we insert dummy cars into them to achieve consistent car numbers. For map information, we only include center lanes with lane directions as features. Similar to vehicles, we introduce dummy lane nodes into each scene to make lane numbers consistently equal to 650.

\subsection{Additional Experiment Results}
\label{app:results}
We present in Figures 8 and 9  some qualitative results for uncertainty estimation.  

To test how the effects of copulaCPTS compare with baseline on other base forecasters, we also include an encoder-decoder architecture with the same embedding size as the RNN models introduced in Appendix C.1 for each dataset. The results are presented in Table \ref{tab:baselines}. We omit these results in the main text because we found that they did not bring significant improvement to time series forecasting UQ. 

Table \ref{tab:horizon} compares model performance compared across different prediction horizons. We show that the advantage of our method is more pronounced for longer horizon forecasts.


\begin{figure}[h]
\centering
\begin{subfigure}[b]{0.33\linewidth}
  \includegraphics[width=\linewidth]{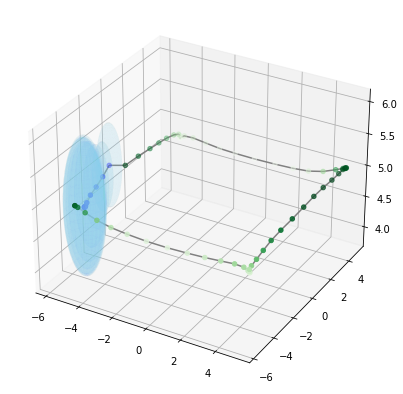}
  \caption{Copula-EncDec}
  \label{fig:cal001}
\end{subfigure}%
\begin{subfigure}[b]{0.33\linewidth}
  \includegraphics[width=\linewidth]{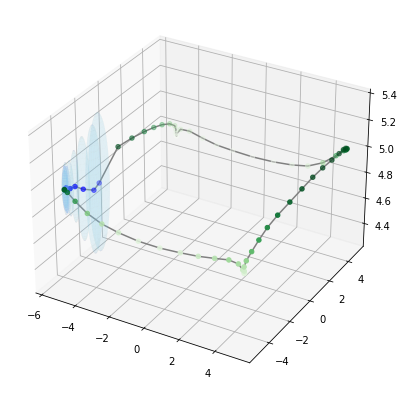}
  \caption{MC Dropout}
  \label{fig:cal002}
\end{subfigure}%
\begin{subfigure}[b]{0.33\linewidth}
  \includegraphics[width=\linewidth]{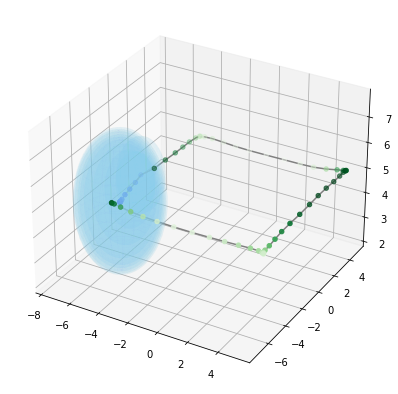}
  \caption{CF-RNN}
  \label{fig:cal003}
\end{subfigure}
\caption{ $99\%$ Confidence region produced by three methods for the drone dataset. Copula methods (a) produce a more consistent, expanding cone of uncertainty compared to MC-Dropout (b) sharper one compared to CF-RNN (c). }
\label{fig:drone}
\end{figure}

\begin{figure*}[h!]
\centering
\begin{subfigure}{.245\textwidth}
  \centering
  \includegraphics[width=.9\linewidth]{figs/argoverse1.png}
  \label{fig:conf1}
\end{subfigure}%
\begin{subfigure}{.245\textwidth}
  \centering
  \includegraphics[width=.9\linewidth]{figs/argoverse2.png}
  \label{fig:conf2}
\end{subfigure}
\begin{subfigure}{.245\textwidth}
  \centering
  \includegraphics[width=\linewidth]{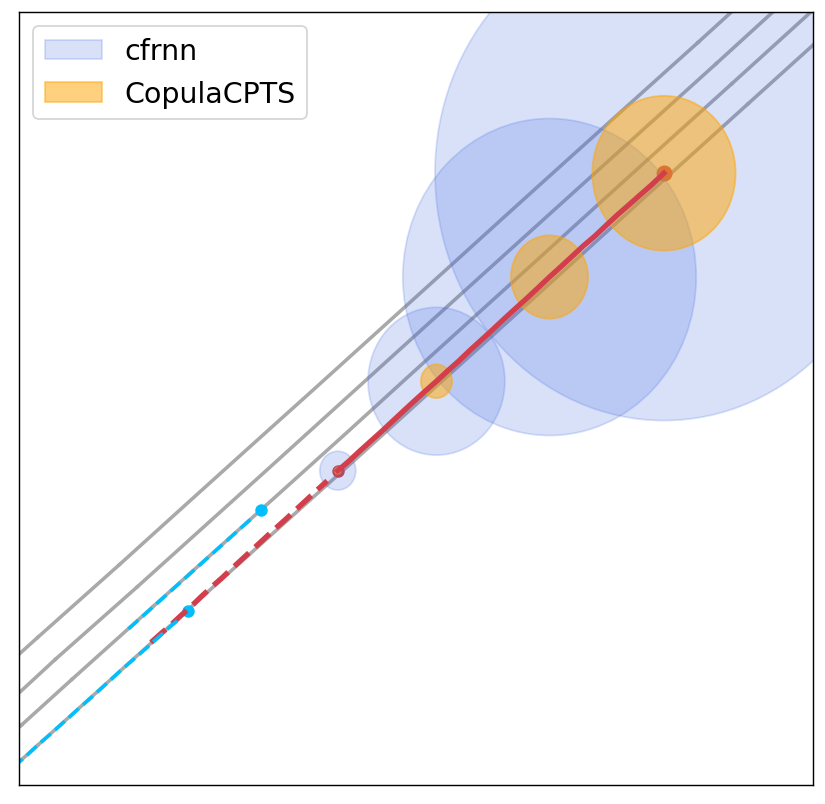}
  \label{fig:cov001}
\end{subfigure}%
\begin{subfigure}{.245\textwidth}
  \centering
  \includegraphics[width=\linewidth]{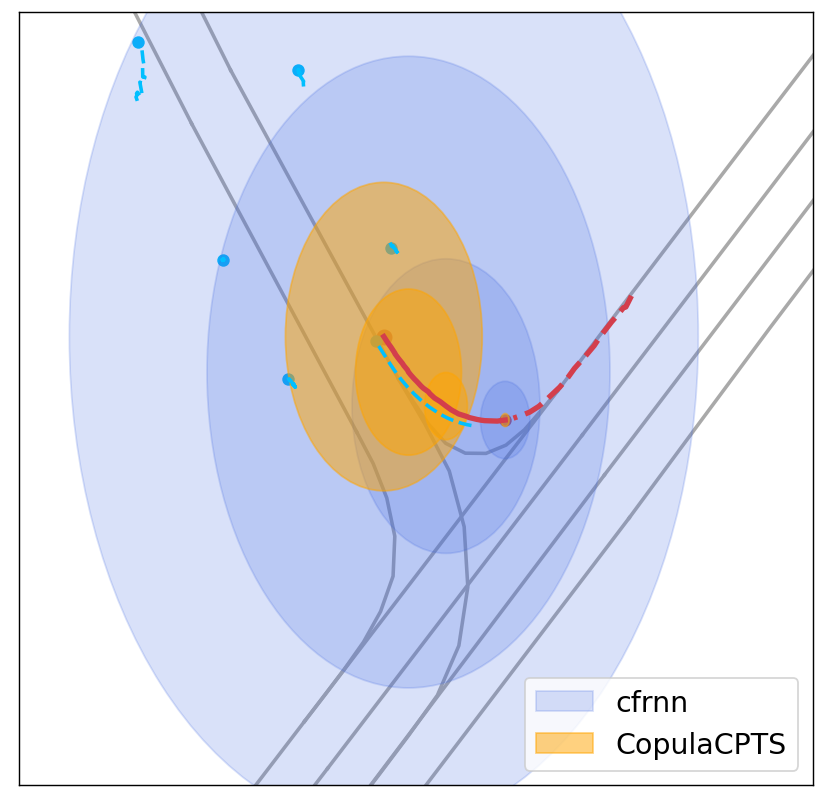}
  \label{fig:cov002}
\end{subfigure}
\caption{Illustrations for confidence regions given by CF-RNN (blue) and CopulaCPTS (orange) at time steps 0, 10, 20, and 30. Note that in order to achieve 90\% coverage, the regions are larger than needed, especially in straight-lane cases like the middle two. Using copulas to couple together time steps results in a much smaller region while achieving similarly good coverage.}
\label{fig:argo}
\end{figure*}

\begin{table*}
\centering
\begin{tabular}{ c | c c  | c c  } 
 \toprule
  \multicolumn{5}{c}{Particle Simulation ($\sigma=.01$)} \\
    \midrule
     & Coverage (90\%) & Area (90\%)   & Coverage (99\%)  & Area (99\%)  \\
  \midrule
MC-dropout &  691.5 \plusminus 2.0 & 2.22 \plusminus 0.05 & 95.2 \plusminus 1.4 & 3.16 \plusminus 0.08 \\ 
BJRNN &  98.9 \plusminus 0.2 & 2.24 \plusminus 0.59 & 99.6 \plusminus 0.3 & 2.75 \plusminus 0.71  \\ 
CF-RNN &  97.1 \plusminus 0.8 & 1.2 \plusminus 0.21 & 99.3 \plusminus 0.6 & 3.16 \plusminus 0.86 \\ 
CF-EncDec &  97.3 \plusminus 1.2 & 1.97 \plusminus 0.4 & 98.9 \plusminus 0.6 & 2.75 \plusminus 0.42 \\
Copula-vanilla & 86.9 \plusminus 1.9 & 0.63 \plusminus 0.07 & 91.9 \plusminus 1.8 & 0.76 \plusminus 0.12 \\
Copula-RNN &  91.3 \plusminus 1.5 & \textbf{1.08} \plusminus 0.14  & 99.4 \plusminus 0.3 & 2.23 \plusminus 0.19  \\ 
Copula-EncDec &  90.8 \plusminus 2.5 & 1.19 \plusminus 0.08 & 99.3 \plusminus 0.5 & 2.16 \plusminus 0.23  \\ 
    \midrule
  \multicolumn{5}{c}{Particle Simulation ($\sigma=.05$)} \\
  \midrule 
  & Coverage (90\%) & Area (90\%)   & Coverage (99\%)  & Area (99\%)  \\
  \midrule
  MC-dropout &  16.1 \plusminus 4.3  &  0.79 \plusminus 0.02  &  33.9 \plusminus 5.1  &  2.12 \plusminus 0.03  \\ 
BJRNN &  100.0 \plusminus 0.0  &  12.13 \plusminus 0.39  &  100.0 \plusminus 0.0  &  15.43 \plusminus 0.85  \\ 
CF-RNN &  94.5 \plusminus 1.5  &  5.79 \plusminus 0.51 &  99.8 \plusminus 2.2  &  19.21 \plusminus 8.19  \\ 
Copula-vanilla & 88.5 \plusminus 1.7 & 4.37 \plusminus 0.16 & 91.7 \plusminus 1.6 & 4.8 \plusminus 0.18\\
Copula-RNN &  \textbf{90.3} \plusminus 0.7  &  4.50 \plusminus 0.07  &  \textbf{99.1} \plusminus 0.8  &  12.82 \plusminus 3.98  \\ 
Copula-EncDec & 91.4 \plusminus 1.1  & \textbf{4.40} \plusminus 0.15  &  98.7 \plusminus 0.1  & \textbf{9.31} \plusminus 1.97  \\ 
    \midrule
  \multicolumn{5}{c}{Drone Simulation ($\sigma=.02$)}\\
  \midrule
   & Coverage (90\%) & Area (90\%)   & Coverage (99\%)  & Area (99\%)  \\
  \midrule
  MC-dropout &  84.5 \plusminus 10.8  &  9.64 \plusminus 2.13 &  90.0 \plusminus 7.8  &  16.02 \plusminus 3.62 \\ 
BJRNN &  90.8 \plusminus 2.8  &  49.57 \plusminus 3.77 &  100.0 \plusminus 4.0  &  65.77 \plusminus 4.56 \\ 
CF-RNN &  91.6 \plusminus 9.2  &  32.18 \plusminus 13.66 &  100.0 \plusminus 0.0  &  36.79 \plusminus 14.03 \\ 
CF-EncDec &  100.0 \plusminus 0.0  &  21.83 \plusminus 26.29 &  100.0 \plusminus 0.0  &  25.03 \plusminus 12.53 \\ 
Copula-vanilla & 89.5 \plusminus 1.3 & 54.67 \plusminus 28.9 & 94.5 \plusminus 0.5 & 68.9 \plusminus 33.42 \\
Copula-RNN &  \textbf{90.0} \plusminus 1.5  &  \textbf{16.52} \plusminus 15.08 &  \textbf{98.5} \plusminus 0.5  &  \textbf{21.48} \plusminus 8.91 \\ 
    \midrule
  \multicolumn{5}{c}{COVID-19 Daily Cases Dataset} \\
  \midrule
    & Coverage (90\%) & Area (90\%)   & Coverage (99\%)  & Area (99\%)  \\
  \midrule
 MC-dropout & 19.1 \plusminus 5.1 & 34.14 \plusminus 0.84  & 100.0 \plusminus 0.0 & 1106.57 \plusminus 25.41 \\
 BJRNN & 79.2 \plusminus 30.8 & 823.3 \plusminus 529.7 & 85.7 \plusminus 27.5 & 149187. \plusminus 51044. \\
 CF-RNN & 95.4 \plusminus 1.9 & 610.2 \plusminus 96.0 & 100.0 \plusminus 0.0 & 121435. \plusminus 26495. \\
 CF-EncDec & 91.7 \plusminus 1.4 & 570.3 \plusminus 22.1  & 100.0 \plusminus 0.0 & 108130. \plusminus 10889.\\
 Copula-vanilla & 90.8 \plusminus 1.4 & 414.42 \plusminus 5.08 & 91.2 \plusminus 1.3 & 41346. \plusminus 59.0 \\
 Copula-RNN & 92.1 \plusminus 1.0 & \textbf{429.0} \plusminus 15.1 & 100.0 \plusminus 0.0 & 88962. \plusminus 9643.\\
 Copula-EncDec & \textbf{90.8} \plusminus 0.3 & 429.4 \plusminus 27.9 & 100.0 \plusminus 0.0 & \textbf{60852.} \plusminus 12263. \\
 \midrule  
  \multicolumn{5}{c}{Argoverse Trajectory Prediction Dataset} \\
 \midrule   
 & Coverage (90\%) & Area (90\%)   & Coverage (99\%)  & Area (99\%)  \\
  \midrule
MC-dropout & 27.9 \plusminus 3.1 & 127.6 \plusminus 20.9 &  31.5 \plusminus 3.9 & 242.1 \plusminus 54.0 \\
BJRNN & 92.6 \plusminus 9.2 & 880.8 \plusminus 156.2 & 100.0 \plusminus 0.0 & 3402.8 \plusminus 268. \\
CF-LaneGCN & 98.8 \plusminus 1.9 & 396.9 \plusminus 18.67 & 100. \plusminus 0.2 & 607.2 \plusminus 8.67 \\
Copula-vanilla & 89.7 \plusminus 0.9 & 107.2 \plusminus 9.56 & 96.5 \plusminus 2.3 & 289.0 \plusminus 38.1 \\
Copula-LaneGCN & \textbf{90.4} \plusminus 0.3 & \textbf{126.8} \plusminus 12.22 & \textbf{99.1} \plusminus 0.4 & \textbf{324.1} \plusminus 42.22 \\
\bottomrule
\end{tabular}
\caption{Additional results. Copula methods achieve a high level of calibration while producing sharper prediction regions. The sharpness gain is even more pronounced at higher confidence levels (99\%), where we want the prediction region to be useful while remaining valid.} 
\label{tab:baselines}
\end{table*}

\begin{table*}[t]
\centering
\begin{tabular}{ c | c c  | c c  | c c } 
 \toprule
  & \multicolumn{2}{c}{1 Step} & \multicolumn{2}{c}{5 Steps} &  \multicolumn{2}{c}{15 Steps}\\
 \midrule
Method & Coverage & Area & Coverage & Area & Coverage & Area \\
 \midrule
MC-Dropout & 97.8 \plusminus 2.0 & 0.4 \plusminus 0.04 & 88.0 \plusminus 7.0 & 0.69 \plusminus 0.25 & 52.3 \plusminus 1.4 & 0.94 \plusminus 0.2 \\ 
BJRNN & 45.3 \plusminus 39.4 & 0.27 \plusminus 0.18 & 97.7 \plusminus 2.1 & 2.69 \plusminus 1.79 & 95.5 \plusminus 2.8 & 19.99 \plusminus 4.83 \\ 
CF-RNN & 100.0 \plusminus 0.0 & \textbf{0.01} \plusminus 0.01 & 77.8 \plusminus 19.2 & 0.8 \plusminus 0.64 & 66.7 \plusminus 0.0 & 18.82 \plusminus 3.73 \\ 
CF-EncDec & 89.9 \plusminus 19.2 & \textbf{0.01} \plusminus 0.01 & 100.0 \plusminus 0.0 & 0.75 \plusminus 0.99 & 88.9 \plusminus 19.2 & 13.07 \plusminus 16.1 \\ 
Copula-RNN & 90.1 \plusminus 0.2 & \textbf{0.01} \plusminus 0.01 & 89.8 \plusminus 0.6 & \textbf{0.54} \plusminus 0.45 & \textbf{90.1} \plusminus 1.2 & 8.25 \plusminus 3.44 \\ 
Copula-EncDec & \textbf{90.0} \plusminus 0.3 & \textbf{0.01} \plusminus 0.0 & \textbf{90.3} \plusminus 0.6 & 0.67 \plusminus 1.01 & 90.5 \plusminus 0.5 & \textbf{7.13} \plusminus 9.5 \\ 
\bottomrule
\end{tabular}
\caption{Performance comparison across different horizons at 90\% confidence level on the drone simulation dataset. The improvement on efficiency is more pronounced when the horizon is longer.} 
\label{tab:horizon}
\end{table*}
\vspace{-0.8em}

\subsection{Study on $\alpha_j$ search}
\label{app:alpha_search}
Figure \ref{fig:alpha} shows the $\alpha_j$ values for each $1-\alpha_j =\hat{F}_j(s_j^*)$ used in Copula CPTS as outlined in line 15 of Algorithm \ref{alg:cpts}. We present $\alpha_j$ values searched using two methods of searching, with dichotomy search for a constant $\alpha$ value for the horizon as in \cite{messoudi2021copula}, and by stochastic gradient descent as outlined in section 4.2.  

The $\alpha_j$ values are an indicator of how interrelated the uncertainty between each time step are: Bonferroni Correction used in \cite{Schaar2021conformaltime}  (grey dotted line in Figure \ref{fig:alpha}) assumes that the time steps are independent, with CopulaCPTS we have lower $1 - \alpha_j$ levels while having valid coverage (blue and orange lines in Figure \ref{fig:alpha}). This shows that the uncertainty of the time steps is not independent, and we are able to utilize this dependency to shrink the confidence region while still maintaining the coverage guarantee.

Table \ref{tab:alphacomp} shows that there are no significant differences between coverage and area performance for the two search methods within the scope of datasets we study in this paper. However, we want to highlight that SGD search is $O(n)$ complexity to optimization steps, regardless of the prediction horizon. SGD also allows for varying $\alpha_j$ which might be useful in some settings, for example capturing uncertainty spikes for some time steps as seen in the COVID-19 dataset of Figure \ref{fig:alpha}. Dichotomy search, on the other hand, is $O(n log(n))$ complexity to the search space depends on granularity, and will be $O(kn log(kn)$ if we want to search for varying $\alpha_j$.

\begin{table*}[h!]
    \centering
    \begin{tabular}{c|c c | c c}
        \toprule
         Dataset & \multicolumn{2}{c}{Coverage (90\%)} &  \multicolumn{2}{c}{Area} \\
         \midrule
         & Fixed $\alpha_j$ & Varying $\alpha_j$  &  Fixed $\alpha_j$ & Varying $\alpha_j$\\
         \midrule
         \midrule
         Particle ($\sigma = .01$) & 91.7 \plusminus 1.9   & 91.5 \plusminus 2.1 & 1.13 \plusminus 0.45  & 1.06 \plusminus 0.36 \\
                 \midrule
         Particle ($\sigma = .05$) & 92.1 \plusminus 1.3   & 90.3 \plusminus 0.7 & 4.89 \plusminus 0.05  & 4.50 \plusminus 0.07 \\
                 \midrule
        Drone &  90.3 \plusminus 0.5   & 90.0 \plusminus 1.5 & 15.92 \plusminus 1.98  & 16.52 \plusminus 7.08 \\
                \midrule
         Covid-19 &  92.9 \plusminus 0.1   & 92.1 \plusminus 1.0 & 498.44 \plusminus 6.36  & 429.0 \plusminus 15.1  \\
            \midrule
        Argoverse &  90.2 \plusminus 0.1   & 90.4 \plusminus 0.3 & 117.1 \plusminus 7.3  & 126.8 \plusminus 12.2 \\
        \bottomrule
    \end{tabular}
    \caption{Coverage and area comparison between stochastic search for fixed $\alpha_j$ and SGD for Varying $\alpha_j$. We do not see a significant difference between the performance of the two.}
    \label{tab:alphacomp}
\end{table*}

\begin{figure*}[h!]
\centering
\begin{subfigure}{.31\textwidth}
  \centering
  \includegraphics[width=\linewidth]{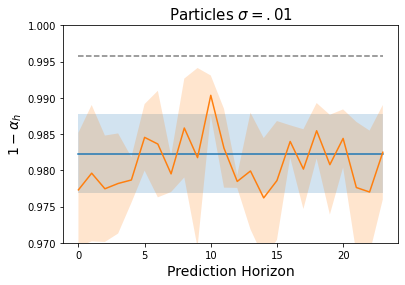}
  \label{fig:alpha-p1}
\end{subfigure}%
\begin{subfigure}{.31\textwidth}
  \centering
  \includegraphics[width=\linewidth]{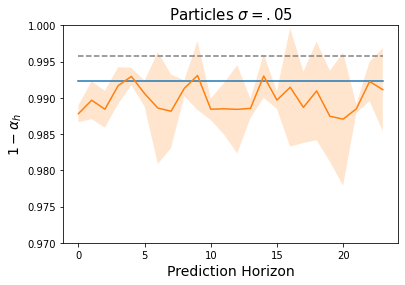}
  \label{fig:alpha-p2}
\end{subfigure}%
\begin{subfigure}{.31\textwidth}
  \centering
  \includegraphics[width=\linewidth]{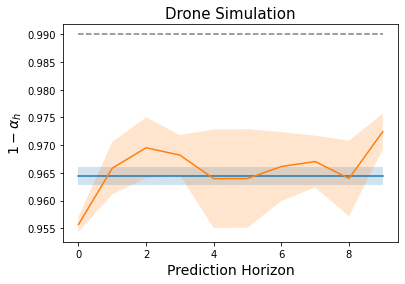}
  \label{fig:alpha-drone}
\end{subfigure}\\
\begin{subfigure}{.31\textwidth}
  \centering
  \includegraphics[width=\linewidth]{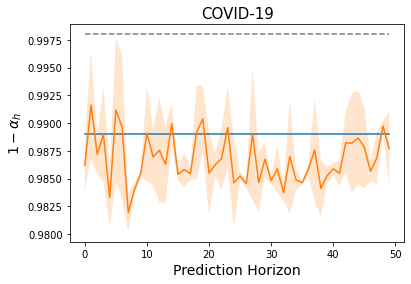}
  \label{fig:alpha-covid}
\end{subfigure} %
\begin{subfigure}{.31\textwidth}
  \centering
  \includegraphics[width=\linewidth]{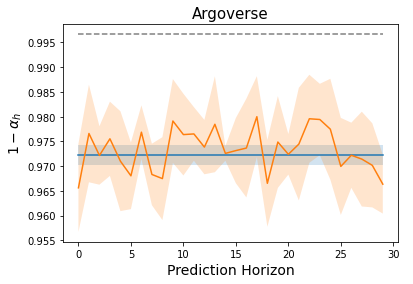}
  \label{fig:alpha-argo}
\end{subfigure}
\begin{subfigure}{.31\textwidth}
  \centering
  \includegraphics[width=0.6\linewidth]{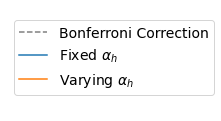}
  \label{fig:label}
\end{subfigure}
\caption{Comparison between dichotomy search for fixed $\alpha_j$ values (blue) and stochastic gradient search for varying $\alpha_j$ (blue) through timesteps. Shaded regions are the standard deviation of the values over 3 runs.} 
\label{fig:alpha}
\end{figure*}


\subsection{Comparison to additional Baselines}

We include a comparison to two additional simple UQ baselines on the particle simulation dataset. 

\textbf{L2-Conformal.} L2-Conformal uses the same underlying RNN forecaster as CF-RNN and Copula RNN.  We use the nonconformity score of the vector norm of all timesteps concatenated together $\| \hat{\textbf{y}}_{t+1:t+k} -\textbf{y}_{t+1:t+k} \|$ to perform ICP. As there are no analytic way to represent a $k \times d_y$-dimensional uncertainty region on 2-D space, we calculate the area and plot the region for L2 Conformal baseline with the maximum deviation at each timestep such that the vector norm still stays within range.

\textbf{Direct Gaussian.} Direct Gaussian uses the same model architecture and training hyperparameters, with the addition of a linear layer that outputs the variance for each timestep, and is optimized using negative log loss, a proper scoring rule for probabilistic forecasting. We obtain the area by analytically calculating the 90\% confidence interval for each variable.

Results in Table \ref{tab:partilel2} show that L2-conformal produces inefficient confidence area, and directly outputting variance under-covers test data. These results align with previous findings and motivate our method, which is both more calibrated and sharper compared to these baselines. We show a visualization in Figure \ref{fig:l2vis} to illustrate the different properties of the methods qualitatively.

\begin{table*}[h]
\centering
\begin{tabular}{ c | c c  | c c  } 
 \toprule
  & \multicolumn{2}{c}{Particle  ($\sigma=.01$)} & \multicolumn{2}{c}{Particle  ($\sigma=.05$)}\\
 \midrule
  Method & Coverage  (90\%) & Area  $\downarrow$ & Coverage  (90\%) & Area  $\downarrow$ \\
\midrule  
 L2-Conformal & $88.5 \pm 0.4$ & $7.21 \pm 0.35$  &  $89.7 \pm 0.6$  & $7.21 \pm 0.35$ \\
 Direct Gaussian & $11.9 \pm 0.09$ & $0.07 \pm 0.31$  & $0.0 \pm 0.0$ & $0.08 \pm 0.02$ \\
 CF-RNN &  $97.0 \pm 2.3$ & $3.13 \pm 3.24$ &  $97.0 \pm 2.3$ &  $5.79 \pm 0.51$ \\
 CopulaCPTS & $\textbf{91.3} \pm 2.1$ & $\textbf{1.08} \pm 0.36$ & $\textbf{90.3} \pm 0.7$ & $\textbf{4.50} \pm 0.07$ \\
\bottomrule
 \end{tabular}
 \caption{ Comparison with two additional baselines on the particle dataset. }
\label{tab:partilel2}
 \end{table*}

\begin{figure}[h]
\centering
\begin{subfigure}[b]{0.3\linewidth}
  \includegraphics[width=\linewidth]{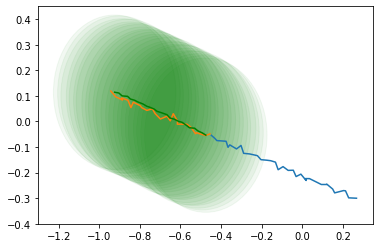}
  \caption{L2 Conformal}
  \label{fig:r1}
\end{subfigure}%
\begin{subfigure}[b]{0.3\linewidth}
  \includegraphics[width=\linewidth]{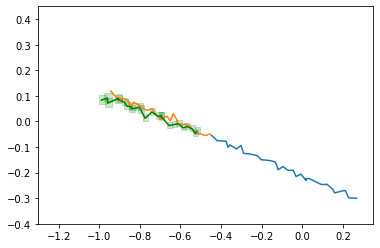}
  \caption{Direct RNN Gaussian}
  \label{fig:r2}
\end{subfigure}\\
\begin{subfigure}[b]{0.3\linewidth}
  \includegraphics[width=\linewidth]{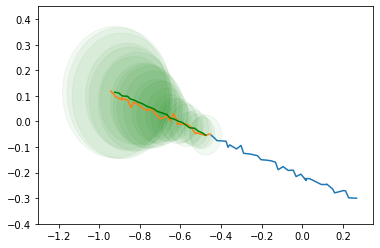}
  \caption{CF-RNN}
  \label{fig:r3}
\end{subfigure}
\begin{subfigure}[b]{0.3\linewidth}
  \includegraphics[width=\linewidth]{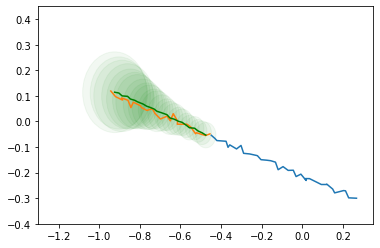}
  \caption{Copula-RNN}
  \label{fig:r4}
\end{subfigure}
\caption{ Visualization of on a sample from the Particle dataset's test set.  }
\label{fig:l2vis}
\end{figure}

 
\paragraph{ Ellipsoidal conformal inference for Multi-Target Regression}  We also compare CopulaCPTS to a newer work, Ellipsoidal CP \citep{messoudi2022ellipsoidal}. The result is presented in Table \ref{tab:baselines_addition}. This method models the uncertainty region of multi-target outputs as a high-dimensional ellipsoid, by estimating a covariance matrix on calibration data. We apply EllipsoidalCP on our data by flattening the time and space dimensions, so the particle simulation, for example, is treated as a multi-target prediction of dim $= 50 = 25 \text{ (time steps) } \times 2 \text{ (dims) } $. We see that the results are comparable in our experiment. When the correlation is more pronounced such as in the covid experiment, EllipsoidalCP can capture the correlation better than CopulaCPTS resulting in improved efficiency. On the other hand, the flexibility of our method allows us to achieve better efficiency than that of EllipsoidalCP. A notable concern for using EllipsoidalCP is that for higher output dimensions, the determinant of the covariance matrix can be extremely large (up to $10^{50}$ in our experiments) and can result in numerical instabilities. 
 
 \begin{table}
\caption{ Performance comparison with  EllipsoidalCP in synthetic and real-world datasets with target confidence $1-\alpha = 0.9$.}
\label{tab:baselines_addition}
\centering
\begin{tabular}{ c | c| c | c } 
 \toprule
&  & EllipsoidalCP & CopulaCPTS \\
\midrule\midrule
 \multirow{2}{*}{\begin{tabular}{l}Particle Sim \\ ($\sigma=.01$)\end{tabular}} & cov & \textbf{90.1}  \plusminus 0.9 & 91.3 \plusminus 1.5 \\ & area
 & \textbf{0.84} \plusminus.005 & 1.08 \plusminus 0.14 \\ \midrule  
 \multirow{2}{*}{\begin{tabular}{l}Particle Sim \\ ($\sigma=.05$)\end{tabular}}& cov  & 90.8 \plusminus 0.4 & \textbf{90.6} \plusminus 0.6 \\ & area 
 & 8.76 \plusminus 0.41 & \textbf{5.27} \plusminus 1.02 \\ \midrule  
 \multirow{2}{*}{\begin{tabular}{l}Drone Sim\\ ($\sigma=.02$)\end{tabular}} & cov &  90.5 \plusminus 0.2 & \textbf{90.0} \plusminus 0.8  \\  & area
 & 28.3 \plusminus 3.1 & \textbf{17.12}  \plusminus 6.93 \\ \midrule\midrule
 \multirow{2}{*}{\begin{tabular}{l}COVID-19 \\ Daily Cases \end{tabular}}  & cov & 93.3 \plusminus 1.5 & \textbf{90.5} \plusminus 1.6  \\  & area
 & \textbf{231.5} \plusminus 22.4 & 408.6 \plusminus 65.8  \\   \midrule 
 \multirow{2}{*}{\begin{tabular}{l}Argoverse \\ Trajectory \end{tabular}}  & cov & 90.3 \plusminus 0.1 & \textbf{90.2} \plusminus 0.1 \\   & area
 & 144.8 \plusminus 8.1 & \textbf{126.8}  \plusminus 12.2 \\  
\bottomrule
\end{tabular}

\end{table}



\end{document}